\documentclass[twoside]{article}


\usepackage[accepted]{aistats2022}

\usepackage[utf8]{inputenc} 
\usepackage[T1]{fontenc}    
\usepackage{hyperref}       
\usepackage{url}            
\usepackage{booktabs}       
\usepackage{amsfonts}       
\usepackage{nicefrac}       
\usepackage{microtype}      
\usepackage{xcolor}         
\usepackage{microtype}
\usepackage{graphicx}
\usepackage{booktabs} 

\usepackage{caption}
\usepackage{subcaption}
\usepackage{amssymb,amsmath,amsthm}
\usepackage{epstopdf}
\usepackage{algorithm,algcompatible}
\usepackage{mathtools}
\usepackage{wrapfig}
\usepackage{soul}
\usepackage{tikz}
\usetikzlibrary{fit,positioning,arrows,automata}
\usepackage[round]{natbib}

\mathtoolsset{showonlyrefs}

\newtheorem{theorem}{Theorem}

\newtheorem{lemma}{Lemma}

\newtheorem{remark}{Remark}
\newtheorem{example}{Example}
\newtheorem{defn}{Definition}
\def\tT{{\text{T}}}
\def\KL{{\text{KL}}}
\def\bx{{\bf x}}
\def\bC{{\bf C}}
\def\bK{{\bf K}}
\def\bM{{\bf M}}

\def\bB{{\bf B}}
\def\bY{{\bf Y}}
\def\E{{\mathbb{E}}}
\def\P{{\mathbb{P}}}
\def\mF{{\mathcal{F}}}
\def\mO{{\mathcal{O}}}
\def\<{{\langle}}
\def\>{{\rangle}}

\def\R{{\mathbb{R}}}
\def\holder{{H$\ddot{\text{o}}$lder's }}

\newcommand{\ones}{\mathbf{1}}

\newcommand{\jiao}[1]{{\color{black}{#1}}}

\newcommand{\PiGamma}{{\Pi_\Gamma^m((\mu_k)_{k\in \Gamma})}}
\newcommand{\RC}{{R_C^\Gamma}}

\DeclarePairedDelimiter\ceil{\lceil}{\rceil}

%

\begin{document}


\twocolumn[

\aistatstitle{On the complexity of the optimal transport problem with graph-structured cost}

\aistatsauthor{ Jiaojiao Fan$^*$ \And Isabel Haasler$^*$ \And  Johan Karlsson\And  Yongxin Chen }

\aistatsaddress{ Georgia Tech \And  KTH  \And KTH  \And Georgia Tech } ]

\begin{abstract}
    Multi-marginal optimal transport (MOT) is a generalization of optimal transport to multiple marginals. Optimal transport has evolved into an important tool in many machine learning applications, and its multi-marginal extension opens up for addressing new challenges in the field of machine learning. However, the usage of MOT has been largely impeded by its computational complexity which scales exponentially in the number of marginals. Fortunately, in many applications, such as barycenter or interpolation problems, the cost function adheres to structures, which has recently been exploited for developing efficient computational methods. In this work we derive computational bounds for these methods. 
    In particular, with $m$ marginal distributions supported on $n$ points, we provide a $ \mathcal{\tilde O}(d(\mathcal{T})m n^{w(G)+1}\epsilon^{-2})$ bound for a $\epsilon$-accuracy when the problem is associated with a graph that can be factored as a junction tree with diameter $d(\mathcal{T})$ and tree-width $w(G)$.
    For the special case of the Wasserstein barycenter problem, which corresponds to a star-shaped tree, our bound is in alignment with the existing complexity bound for it.
\end{abstract}







\section{Introduction}
\label{sec:introduction}

 The history of optimal transport can be traced back to the 18-th century when the French mathematician Monge introduced this tool for his engineering projects. In optimal transport problems one seeks an optimal strategy to move resources from an initial distribution to a target one.
 This theory has initially 
 had a tremendous impact to fields such as economics and logistics. During the last decades, with new efficient computational methods \citep{villani2009optimal,cuturi2013sinkhorn} and more available computational power, optimal transport theory has also been used for addressing a broad class of problems both within the machine learning community \citep{peyre2019computational, solomon2014wasserstein, solomon2015convolutional, arjovsky2017wasserstein}, but also in related fields such as imaging \citep{haker2004optimal} and systems and control \citep{chen2016relation}.

 Multi-marginal optimal transport (MOT) is a natural extension of standard optimal transport to scenarios with more than two marginal distributions. In the discrete setting, the objective of MOT is to find an optimal coupling between $m$ marginals $\mu_1,\dots,\mu_m\in \mathbb{R}_+^n$ over $X$, where $X$ is a discrete space with support in $n$ points.
A $m$-mode tensor $\bB \in \mathbb{R}_+^{n^m}$ is a feasible transport plan if it satisfies the assigned marginals,
$P_k(\bB) = \mu_k$, where 
\begin{equation} \label{eq:proj_bruteforce}
    [ P_k(\bB) ] (x_k) = \sum_{ \bx \setminus x_k} \bB( \bx), \quad \text{for all } x_k \in X,
\end{equation}
where $\bx = (x_1,\dots, x_m) \in X^m$.
In this paper we consider a version of this problem where the marginals are typically only imposed on a subset of the transport tensors nodes, and we denote this subset of indices by $\Gamma \subset \{1,\dots,m\} $. The set of feasible transport plans consistent with these marginals $\{\mu_k\}_{k\in\Gamma}$ is then 
\begin{equation}
    \PiGamma = \{ \bB \in \mathbb{R}^{n^m} : P_k(\bB) = \mu_k, \forall k \in \Gamma  \}.
\end{equation}
Given a non-negative cost tensor $\bC \in \mathbb{R}^{n^m}_+$, where $\bC(\bx)$ denotes the cost associated with a unit mass on the tuple $\bx$, the multi-marginal optimal transport problem reads
\begin{align}\label{eq:unregu mot}
    \min_{ \bB \in \PiGamma } \langle \bC, \bB \rangle.
\end{align}

The MOT problem is a linear program, thus, in principle, the simplex algorithm can be used to solve it exactly. The complexity however explodes quickly as the problem size increases. In practice, the MOT is solved approximately instead. The goal of these approximation algorithms is to find $ \widehat \bB \in \PiGamma$ such that $\langle \bC, \widehat \bB \rangle$ is an $\epsilon$-approximation of the MOT problem \eqref{eq:unregu mot}. That is, $\widehat \bB$ is an approximation of the transport tensor and satisfies
\begin{equation}
    \langle \bC, \widehat \bB \rangle \leq  \min_{ \bB \in \PiGamma} \langle \bC, \bB \rangle + \epsilon.
\end{equation}
A popular method to approximately solve the MOT problem \eqref{eq:unregu mot} is to solve an entropic regularized version of it where an entropy barrier term is added to the objective. This regularized problem can be solved by the renowned Sinkhorn iterations \citep{DemSte40,cuturi2013sinkhorn}.

\paragraph{Related work:}
A fundamental question in the study of MOT algorithms is understanding their complexities, and several complexity bounds have been derived over the last few years for various MOT algorithms \citep{lin2019complexity, altschuler2020polynomial, carlier2021linear}. The best known complexity bound for the general multi-marginal Sinkhorn iterations is%
$\mathcal{ \tilde O}(\frac{m^3n^m}{\epsilon^2})$
\citep{lin2019complexity} \jiao{with greedy updates}, which scales exponentially in the number of marginals $m$. This is not surprising as the size of the variable $\bB$ grows exponentially. 
This complexity bound can be improved by exploiting the structure of the cost tensor $\bC$. A well-known example is the Wasserstein barycenter problem where the cost can be decomposed into pairwise costs between the marginals and the barycenter. \citet{kroshnin2019complexity} shows that the iterative scaling algorithm finds an $\epsilon$-approximate solution to the barycenter between $L$ distributions in $ \mathcal{ \tilde O}(\frac{ L n^2}{\epsilon^2})$ operations. A more general class of costs where better computation complexity can be achieved is associated with the tree structure (see Section \ref{sec:graph}).
Such structures appear in various applications, such as barycenter problems \citep{lin2020fixedsupport, kroshnin2019complexity}, interpolation problems \citep{solomon2015convolutional}, and estimation problems \citep{elvander2020multi}.
 It was shown in \citet{haasler2021pgm} that a complexity bound for MOT problems with tree-structured cost (including the barycenter problem as a special case) is  $ \mathcal{ \tilde O}(\frac{ m^4 n^2}{\epsilon^2})$, where $m$ denotes the number of marginals.
 Many other MOT problems are structured according to graphs that contain cycles, e.g., in the generalized Euler flow problem \citep{benamou2015iterative}, control applications \citep{haasler2020optimal}, and multi-species problems \citep{haasler2021scalable}.
In \citet{altschuler2020polynomial}, it was shown that the complexity for MOT with general graph-structured cost scales polynomially as the number of marginals increases, as long as the tree-width of the graph is properly bounded, but they do not provide explicit dependencies on the parameters.
Note that some other structures of the cost tensors such as the low rank property can be leveraged \citep{altschuler2020polynomial}, but these are very different to the graphical structure considered in this work.

\paragraph{Our contribution:}
The purpose of this work is to provide a tighter complexity bound for solving the MOT problem with general graph-structured costs.
Tree-structured optimal transport problems are often formulated as a sum of bi-marginal optimal transport problem.
The numerical scheme is
often based on regularizing each of the bi-marginal problems locally
in previous work.
However, if the underlying graph structure contains cycles, there is no such representation of the problem.
In this work we suggest to use a regularization on the 
MOT
tensor similar to the one suggested in \citet{carlier2021linear}.
This regularization also simplifies the complexity analysis (see Remark \ref{rem:regularizer}). 
For the cases where the MOT problem is structured according to a tree, i.e., the graph does not contain any cycles, we show that an $\epsilon$-approximation of the solution can be found within
$ \mathcal{\tilde O}(\bar d(G) m n^2\epsilon^{-2})$ operations, where $\bar d(G)$ denotes the average distance between two leaves of the tree.
This improves on the previous result $\mathcal{\tilde O}(m^4 n^2\epsilon^{-2})$ for tree-structured MOT in \cite{haasler2021pgm}.
Especially
 for the barycenter problem, which corresponds to the special case of a star-shaped graph, this matches the best known bound when no further acceleration of the method is applied.
The framework in this paper also treats a class of MOT problems that is much larger than what can be described by bi-marginal OT problems.
In the case of a general graph $G$, the complexity is $ \mathcal{ \tilde O}  \left( \bar d(\mathcal{T}) m  n^{w(G)+1} \epsilon^{-2} \right)$, where $\mathcal{T}$ is a minimal junction tree over the graph $G$, and $w(G)$ is the tree-width of $G$.
The best-known complexity bounds for optimal transport without acceleration are summarized in Table~\ref{tab:bounds}.
%
There are accelerated versions of the Sinkhorn algorithm, see, e.g., \citet{lin2019complexity,kroshnin2019complexity} that can improve the dependence with respect to $\epsilon$ from $\epsilon^{-2}$ to $\epsilon^{-1}$. Note that these accelerations cannot improve the dependence over $m$ or $n$. Since the algorithm studied in this work is not accelerated, we compare the complexity bounds only to algorithms with no acceleration.
%
\begin{table*}[tb]
\begin{minipage}{0.72\linewidth}
	\caption{Best-known complexity bounds for optimal transport without acceleration 
	}
	\label{tab:bounds}
	\begin{center}
		\begin{small}
\begin{tabular}{lll}
\toprule
Problem & Complexity & Paper \\
\toprule
Bi-marginal optimal transport   &$\mathcal{\tilde O}(n^2\epsilon^{-2})$ & \citet{dvurechensky2018computational} \\
\midrule
Barycenter optimal transport & $\mathcal{\tilde O}(m n^2\epsilon^{-2})$ & \citet{kroshnin2019complexity} \\
\midrule
General MOT   & $\mathcal{\tilde O}(m^3 n^m\epsilon^{-2})$ & \citet{lin2019complexity} \\
\midrule
Tree-structured MOT  &  $ \mathcal{ \tilde O}( m^4 n^2\epsilon^{-2})$ & \citet{haasler2021pgm} \\
\midrule
Tree-structured MOT  & $\mathcal{\tilde O}(\bar d(G)m n^2\epsilon^{-2})$ & Ours  \\
\midrule
balanced Graph-structured MOT  & $ \mathcal{ \tilde O}  ( \bar d(\mathcal{T}) m  n^{w(G)+1} \epsilon^{-2} )$ & Ours\\ 
\bottomrule
\end{tabular}
\end{small}	\end{center}
\end{minipage} \hfill
\begin{minipage}{0.2\linewidth}
\centering
    \begin{tikzpicture}
\def \radius {35pt}
\tikzstyle{main}=[circle, minimum size = 20pt, thick, draw =black!80]
\node[main] at (360:0mm) (center) {\small $\mu$};
\foreach \i  in {1,...,5}{
  \node[main,,fill=black!10] at ({\i*70}:\radius) (u\i) { \small $\mu_{\i}$};
  \draw[-, thick] (center)--  (u\i);
}
\end{tikzpicture}

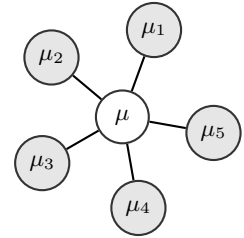
\captionof{figure}{Graph associated with a barycenter problem \eqref{eq:barycenter_pairwise}, where $L=5$.}
    \label{fig:barycenter_graph}
\end{minipage}
\end{table*}

\paragraph{Notation:}

For a matrix $C \in \mathbb{R}^{n \times n}$, we denote $\|C\|_\infty$ its largest element.
We denote a graph as the tuple $G=(V,E)$, where $V$ is the set of vertices, and $E$ is the set of edges. For a vertex $k\in V$, we denote the set of neighbouring vertices by $N(k) \subset V$. Let $\ones_d$ denote the all-ones vector/matrix/tensor in $\R^d$, and let $\exp(\cdot), \log(\cdot), \odot$, and $./$ denote the element-wise exponential, logarithm, multiplication, and division of tensors, respectively.
\jiao{The $p(m,n,\epsilon)=\mathcal{\tilde O}(q(m,n,\epsilon))$ notation absorbs polylogarithmic factors related to $n$, i.e., there exist positive constants $c_2,c_3$ such that $p(m,n,\epsilon) \leq c_2 q(m,n,\epsilon)(\log n)^{c_3} $. }

\section{Graph-structured MOT}\label{sec:graph}

In this paper we consider MOT problems with a cost that decouples according to a graph.
Such structures appear in many applications, for instance in barycenter problems \citep{lin2020fixedsupport, kroshnin2019complexity}, interpolation problems \citep{solomon2015convolutional}, and estimation problems \citep{elvander2020multi,SinHaaZha20}. In fact, one of the very first studies of MOT, on the generalized Euler-flow problem, has a graph-structured cost \citep{Bre89,benamou2015iterative}.

%
%
\begin{example} \label{exa: barycenter}
(Fixed-support Wasserstein Barycenter).
    {\makeatletter
		\let\par\@@par
		\par\parshape0
	\everypar{}
    A special case of a graph-structured optimal transport problem is the fixed support barycenter problem with uniform weights
        \begin{align} \label{eq:barycenter_pairwise}
        \min_{\mu\in \mathbb{R}^n} \sum_{\ell=1}^{L} \frac{1}{L}  W(\mu, \mu_\ell), 
        ~ \text{ with }~  W(\mu,\nu)=\inf_{B \in \Pi(\mu,\nu)} \langle C, B \rangle,
    \end{align}
    where $\Pi(\mu,\nu)=\Pi^2_{\{1,2\}}(\mu,\nu)$ denotes the standard set of feasible transport plans for two marginals.
    The underlying structure can be described by a star-graph as illustrated in Figure~\ref{fig:barycenter_graph}.
    Problem \eqref{eq:barycenter_pairwise} can be written as the multi-marginal problem \eqref{eq:unregu mot}, where the cost tensor $\bC \in \mathbb{R}^{n^{L+1}}$ is defined as
    \begin{equation}\label{eq:barycentercost}
         \bC(x_1,\dots,x_L,x_{L+1}) = \sum_{\ell=1}^{L} \frac{1}{L} C(x_{L+1}, x_{\ell}),
    \end{equation}
    \par }
    and constraints are given on the set $\Gamma=\{1,\dots, \ell\}$.
\end{example}
Similar to Example~\ref{exa: barycenter}, we can define a MOT problem that is structured according to any graph $G=(V,E)$.
Therefore, we associate each vertex in $V$ with a marginal of the transport plan $\bB$, and each edge in $E$ with a pair-wise cost.
That is, for the interaction between vertices $k_1$ and $k_2$ we define a cost matrix $C^{(k_1,k_2)}$, and we let $E$ be the set of all these pair-wise interactions. Then the graph-structured cost tensor is defined by
\begin{equation} \label{eq:C_graph}
    \bC ( x_1,\dots,x_m) = \sum_{(k_1,k_2)\in E} C^{(k_1,k_2)}(x_{k_1},x_{k_2}).
\end{equation}
Problem \eqref{eq:unregu mot} with a cost tensor of the form \eqref{eq:C_graph} is called a graph-structured MOT problem \citep{haasler2020tree, haasler2021pgm}.

Many graph-structured optimal transport problems, for instance interpolation and barycenter problems, are naturally described by tree graphs, i.e., graphs that do not contain any cycles. 
Moreover, any graph can be converted into a tree using the junction tree technique \citep{KolFri09}, and we will use this representation to derive complexity bounds for general graph-structured MOT problems.
It should be noted that in the case of a tree-structured MOT problem we can without loss of generality consider the case, where $\Gamma$ is the set of leaves \cite[Proposition 3.4]{haasler2020tree}.




\section{Sinkhorn belief propagation algorithm}

In practical applications the MOT problem is often prohibitively large  for standard linear programming solvers, and one therefore has to resort to numerical methods to obtain an appropriate solution.
A well-known approach, based on the seminal work by \cite{cuturi2013sinkhorn}, is to regularize the objective in \eqref{eq:unregu mot} with an entropic barrier term \citep{benamou2015iterative}.
In particular, we introduce the barrier term
\begin{equation} \label{eq:regularization}
    H(\bB \mid \bM)=\langle\bB, \log(\bB)-\log(\bM) -\ones_{n^m}\rangle,
\end{equation}
where
\begin{equation}
    \bM(x_1,x_2,\ldots, x_m)=\prod_{k\in \Gamma} \mu_k(x_k).
\end{equation}
The regularized MOT problem reads then
\begin{align} \label{eq:ot_multi_reg}
    \min_{ \bB \in \PiGamma } \langle \bC, \bB \rangle + \eta H(\bB\mid \bM),
\end{align}
where $\eta>0$ is a small regularization parameter.

\begin{remark}\label{rem:regularizer}
Note that our choice of entropy regularizer is slightly different from the standard one $\langle\bB, \log(\bB)-\ones_{n^m}\rangle$ often used for the Sinkhorn algorithm. The extra term $-\langle\bB,\log(\bM)\rangle$ turns out to simplify the approximation procedure (there is no need to alter the marginal distributions first to increase the minimum value of their elements as in \citet{dvurechensky2018computational,lin2019complexity}) and the complexity analysis (see, e.g., Lemma \ref{lem:bound_lambda}).
\end{remark}

The optimal solution of the regularized multi-marginal optimal transport problem \eqref{eq:ot_multi_reg} can be compactly expressed in terms of the optimal variables of the dual problem. 
More precisely, the optimal transport tensor is of the form
\begin{align} \label{eq:opt_sol}
  \hspace{-20pt} [\bB( \Lambda )] (x_1,\dots, x_m) = & \exp\left( -\bC(x_1,\dots, x_m)/ \eta\right) \nonumber \\
& \cdot  \prod_{k \in \Gamma}  \Big(  \exp \!\Big(\frac{\lambda_k (x_k)}{\eta}\Big) \mu_k(x_k)  \Big) ,
\end{align}
where $\Lambda = \{ \lambda_k \}_{k \in \Gamma}$ is the optimal solution of the dual of \eqref{eq:ot_multi_reg}, which is given by (cf. \cite{haasler2020tree})
\begin{equation} \label{eq:ot_multi_dual}
    \min_{ \Lambda } \psi( \Lambda) :=  \eta P( \bB(\Lambda) ) - \sum_{k\in \Gamma} \mu_k^\text{T} \lambda_k.
\end{equation}
Here, $P(\bB) = \sum_{\bx} \bB(\bx) \in \mathbb{R}$ is the projection over all marginals of $\bB$, i.e., the sum over all elements.

The optimal solution to \eqref{eq:ot_multi_dual} can be efficiently found by the renowned Sinkhorn iterations \citep{benamou2015iterative, haasler2021pgm}.
In particular, the multi-marginal Sinkhorn algorithm is to find the scaled variables $u_k= \exp(\lambda_k /\eta)$, for $k \in \Gamma$, 
by iteratively updating them according to
\begin{equation} \label{eq:sinkhorn}
    u_k^{(t+1)} \leftarrow u_k^{(t)} \odot \mu_k ./ P_k( \bB(\Lambda^{(t)})). 
\end{equation}
There are several approaches to perform these updates: At each iteration, the next marginal $k\in \Gamma$ to be updated can be picked in a random, cyclic, or greedy fashion \citep{benamou2015iterative,lin2019complexity}. In this paper we discuss the random updating rule.
The greedy update requires more operations for each iteration as all the projections for $k\in\Gamma$ are needed for an update. The traditional cyclic update introduces strong couplings between updates which makes the complexity analysis much more challenging.

For general MOT, computing the projections $P_k(\bB(\Lambda^{(t)}))$ requires $\mathcal{O}(n^m)$ operations, which creates a large computational burden.
However, in case the MOT problem has a tree-structure, the projections $P_k( \bB(\Lambda^{(t)}) )$ can be computed by a message-passing algorithm that utilizes the belief propagation algorithm \citep{YedFreWei03}, as described in \cite{haasler2021pgm, haasler2020tree}. This requires only matrix-vector multiplications of size $n$.
In particular, the projections are of the form
\begin{align} 
    & [ P_k(  \bB(\Lambda^{(t)}) )](x_k)  = \\ & \begin{dcases} u_k^{(t)} (x_k) \mu_k(x_k) m_{\jiao{\ell_k} \to k} (x_k), & \quad \text{ if } k \in \Gamma \label{eq:proj_messages} \\
    \prod_{\ell \in N(k)} m_{\ell \to k} (x_k), & \quad \text{ if } k \notin \Gamma,  \end{dcases}
\end{align}
where the messages are computed as
\begin{align}
    & m_{\ell \to k} (x_k) 
    =\\
    & \begin{dcases}  \sum_{x_\ell} K^{(k,\ell)}(x_k,x_\ell) \prod_{j \in N(\ell) \setminus k} m_{j \to \ell} (x_\ell), \quad \text{ if } \ell \notin \Gamma \label{eq:message_updates} \\
    \sum_{x_\ell} K^{(k,\ell)}(x_k,x_\ell) u_\ell^{(t)}(x_\ell) \mu_\ell(x_\ell), \quad \text{ if } \ell \in \Gamma, 
    \end{dcases}
\end{align}
where $K^{(k,\ell)}(x_k,x_\ell)= \exp( - C^{(k,\ell)}(x_k,x_\ell)/ \eta)$.

Since we can without loss of generality assume that $\Gamma$ is the set of leaves of the tree, each vertex $k\in \Gamma$ has a unique neighbour $\ell_k \in N(k)$.
The Sinkhorn iterations \eqref{eq:sinkhorn} with the projections \eqref{eq:proj_messages} thus read
\begin{equation}
    u_k^{(t+1)}(x_k) \leftarrow ( m_{\ell_k \to k} (x_k) )^{-1}.
\end{equation}
Note that when we update the scaling vectors $u_{k^{(t)}}^{(t)}$ and in the previous iteration updated $u_{k^{(t-1)}}^{(t-1)}$ it is only required to recompute the messages between $k^{(t-1)}$ and $k^{(t)}$ \citep{haasler2021pgm,SinHaaZha20}.
The Sinkhorn method is summarized in Algorithm~\ref{algo:sinkhorn}. Here, we apply a random updating scheme, where the next scaling vector to be updated is picked from a uniform distribution of the remaining scaling vectors, except the previous one. Other common update rules for the Sinkhorn iterations, such as cyclic or greedy updates, can be obtained by simply changing the selection of $k^{(t)}$ in Algorithm~\ref{algo:sinkhorn}.
\begin{algorithm}[t]
    \caption{SINKHORN\_BP($\epsilon', \{\mu_k \}_{k \in \Gamma}, \bC, \eta$)}
    \label{algo:sinkhorn}
    \begin{algorithmic}
        \STATE{ {\bfseries Initialization:}  $u_k^{(0)} = \mathbf{1} \in \R^n $, for $k \in \Gamma$; \ $t=1$; \ $k^{(0)} \in \Gamma$}
        \WHILE{ $\sum_{k \in \Gamma} \|P_k(\bB(\Lambda^{(t)})) - \mu_k \|_1 \geq \epsilon' $} 
        \STATE{1. Randomly pick $k^{(t)} \in \Gamma \setminus k^{(t-1)}$} 
        \STATE{2. Update messages $m_{k_1 \to k_2} $ according to \eqref{eq:message_updates} on the path from $k^{(t-1)}$ to $k^{(t)}$ }
        \STATE{3. Update $u_k^{(t+1)} (x_k)  $ to be $$\begin{cases} ( m_{\ell_k \to k} (x_k) )^{-1}, & \text{ for } k=k^{(t)}, \text{ and } \ell_k \in N(k), \\
                u_k^{(t)}(x_k),                & \text{ for } k \in \Gamma \setminus k^{(t)}, \end{cases} 
                $$}
        \STATE{4. $t \leftarrow t+1$ }
        \ENDWHILE
        \STATE { {\bfseries Output:}  $u_k^{(t+1)}$, $k \in \Gamma$ }
    \end{algorithmic}
\end{algorithm}

From the scaling vectors  $\{u_k\}_{k\in \Gamma}$ that are returned from Algorithm~\ref{algo:sinkhorn} we can construct the transport tensor $\widetilde \bB$ as in \eqref{eq:opt_sol}. However, this tensor is not guaranteed to lie in the feasible set $\PiGamma$, and thus a rounding step is needed. This is based on the rounding for bi-marginal optimal transport in \cite[Algorithm 2]{altschuler2017near}, and is stated in Algorithm~\ref{algo:round}.
Note that a transport tensor that solves a graph-structured MOT problem \eqref{eq:unregu mot} or \eqref{eq:ot_multi_reg} is fully determined by the projections $P_{k_1,k_2}(\bB)$ on the edges $(k_1,k_2)\in E$ \citep{KolFri09}, which are given by
\begin{equation} \label{eq:proj_bi}
    [P_{k_1,k_2}(\bB)] (x_{k_1}, x_{k_2}) = \sum_{\bx \setminus\{x_{k_1}, x_{k_2}\} } \bB( \bx).  
\end{equation}
By slight abuse of notation, we let 
$\bB((B_{k_1,k_2})_{(k_1,k_2) \in E})$ denote this tensor that decouples according to the tree structure $G$ and
satisfies the projections $[P_{k_1,k_2}(\bB)] = B_{k_1,k_2}$ for $(k_1,k_2) \in E$  \citep{KolFri09}. Note that the projections \eqref{eq:proj_bi} can be cheaply computed from the scaling vectors $\{u_k\}_{k\in \Gamma}$ as described in \cite[Theorem~4]{haasler2021pgm}.
\begin{algorithm}[t]
    \caption{ROUND $(\bB,\{\mu_k\}_{k \in \Gamma} )$}
    \label{algo:round}
    \begin{algorithmic}
        \STATE {\bfseries Initialization:}  $\bB_{k,\ell_k}=P_{k,\ell_k}(\bB)\in \R^{n \times n} $ for all $k \in \Gamma$ and each $\ell_k \in N(k)$
        \FOR{$k\in \Gamma$}
        \STATE{Input $\left(\bB_{k,\ell_k};P_{\ell_k}(\bB),\mu_k \right)$ into \cite[Algorithm 2]{altschuler2017near} and get $\widehat{\bB}_{k, \ell_k}   $ such that $\widehat{\bB}_{k,\ell_k} \in \Pi(P_{\ell_k}(\bB),\mu_k)$}
        \ENDFOR
        \STATE{ {\bfseries Output:} 
        $ \widehat \bB = \{ \widehat \bB_{k, \ell_k}; k \in \Gamma \} \cup \{ P_{k_1,k_2} (\bB) ; (k_1,k_2)\in E, k_1,k_2 \notin \Gamma \}$   } 
    \end{algorithmic}
\end{algorithm}
%
%
The full method for finding an $\epsilon$-approximate solution to a tree-structured MOT problem is summarized in Algorithm~\ref{algo:complete}.
\begin{algorithm}[t]
    \caption{$\epsilon$-approximation of tree-structured MOT }
    \label{algo:complete}
    \begin{algorithmic}
\STATE{ 
$\eta \leftarrow \frac{\epsilon}{2 m \log(n)}$; \  $\epsilon' \leftarrow \frac{\epsilon }{8 {R_C^\Gamma}} $ .}
\STATE{ 
$\{u_k\}_{k\in \Gamma} \leftarrow \text{SINKHORN\_BP} (\epsilon', \{\mu_k \}_{k \in \Gamma}, \bC, \eta)$. (Algorithm~\ref{algo:sinkhorn}) }
\STATE{Construct $\widetilde \bB(B_{k_1,k_2, (k_1,k_2) \in E})$ from $\{u_k\}_{k\in \Gamma}$}
\STATE{ 
$\widehat \bB \leftarrow \text{ROUND} (\widetilde{\bB}, \{\mu_k \}_{k \in \Gamma})$. (Algorithm~\ref{algo:round}) }
\STATE{ {\bfseries Output:} $\widehat{\bB}$ }

    \end{algorithmic}
\end{algorithm}


\section{Tree structured MOT analysis}
 In this section, we present a complexity bound for the Sinkhorn belief propagation algorithm for solving MOT problems with tree-structured costs. We first provide a few technical lemmas that will be used in the proof.  The proofs of all the supporting lemmas are given in the supplementary material.
The first result provides bounds for the scaling vector iterates.
\begin{lemma} \label{lem:bound_lambda} 
    Let $\lambda_k = \eta \log(u_k)$, where $u_k$ are generated by Algorithm~\ref{algo:sinkhorn}. 
    Let $\Lambda^* = \{\lambda_k^*\}_{k\in \Gamma}$ be a solution of \eqref{eq:ot_multi_dual}.
    Then for each $k \in \Gamma$ it holds
    \begin{equation*}
        \begin{aligned}
            \max_{x_k} \lambda_k (x_k)  -\min_{x_k} \lambda_k (x_k)\leq R_C^k, \\
            \max_{x_k} \lambda_k^* (x_k)  -\min_{x_k} \lambda_k^* (x_k) \leq R_C^k,
        \end{aligned}
    \end{equation*}
    where
    \begin{equation*}
        R_C^k :=  \|C^{(k,\ell_k)}\|_\infty,
    \end{equation*}
    and where $\ell_k \in N(k)$ is the (unique) neighbour of $k$.
\end{lemma}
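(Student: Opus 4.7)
The plan is to exploit the fact that $k \in \Gamma$ is a leaf, so the Sinkhorn update \eqref{eq:sinkhorn} for $u_k$ reduces to $u_k^{(t+1)}(x_k) = 1/m_{\ell_k \to k}(x_k)$ where $\ell_k$ is the unique neighbour of $k$. Taking $-\eta \log$, the oscillation of $\lambda_k$ is directly the oscillation of $-\eta \log m_{\ell_k \to k}$, which we can control pointwise using the entry-wise bound on the kernel $K^{(k,\ell_k)}$.

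First, I would expand the message by \eqref{eq:message_updates}, writing
\begin{equation*}
    m_{\ell_k \to k}(x_k) = \sum_{x_{\ell_k}} K^{(k,\ell_k)}(x_k,x_{\ell_k})\, g(x_{\ell_k}),
\end{equation*}
where $g(x_{\ell_k})$ collects the remaining factors (either $u_{\ell_k}(x_{\ell_k})\mu_{\ell_k}(x_{\ell_k})$ or the incoming messages) and is independent of $x_k$. Since $0 \le C^{(k,\ell_k)}(x_k,x_{\ell_k}) \le \|C^{(k,\ell_k)}\|_\infty = R_C^k$, the kernel satisfies $\exp(-R_C^k/\eta) \le K^{(k,\ell_k)}(x_k,x_{\ell_k})/K^{(k,\ell_k)}(x_k',x_{\ell_k}) \le \exp(R_C^k/\eta)$ for any $x_k,x_k'$. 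Factoring these bounds out of the sum yields
\begin{equation*}
    \exp(-R_C^k/\eta) \;\le\; \frac{m_{\ell_k\to k}(x_k)}{m_{\ell_k\to k}(x_k')} \;\le\; \exp(R_C^k/\eta),
\end{equation*}
and taking $-\eta\log$ gives $|\lambda_k(x_k) - \lambda_k(x_k')| \le R_C^k$, from which the desired oscillation bound follows. For the initial iterate $u_k^{(0)} = \mathbf{1}$, the bound is trivial.

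For the second claim on the optimal $\Lambda^*$, I would use the first-order optimality condition of the dual \eqref{eq:ot_multi_dual}: at a minimizer, $P_k(\bB(\Lambda^*)) = \mu_k$ for every $k\in\Gamma$. Combined with \eqref{eq:proj_messages}, this forces $u_k^*(x_k) \mu_k(x_k) m_{\ell_k\to k}^*(x_k) = \mu_k(x_k)$ on the support of $\mu_k$, so that $u_k^*(x_k) = 1/m_{\ell_k \to k}^*(x_k)$. Applying exactly the same ratio argument as above to $m_{\ell_k\to k}^*$ gives the oscillation bound for $\lambda_k^*$ on the support; any ambiguity off the support can be resolved by choosing the canonical extension $u_k^*(x_k) = 1/m_{\ell_k\to k}^*(x_k)$, for which the same bound holds.

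I do not expect a serious obstacle: the entire argument rests on the observation that a leaf's dual variable is, up to sign and scaling, the log of a single linear functional of the cost kernel, so its oscillation is controlled by $\|C^{(k,\ell_k)}\|_\infty$. The only mildly subtle point is pinning down a consistent choice of $\lambda_k^*$ when $\mu_k$ vanishes at some $x_k$, which is handled by the canonical selection above.
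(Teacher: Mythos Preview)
Your proposal is correct and follows essentially the same approach as the paper: both use that at a leaf $k$ the Sinkhorn update gives $u_k(x_k)=1/m_{\ell_k\to k}(x_k)$, bound the $x_k$-dependence of the message via the entrywise kernel bound $e^{-R_C^k/\eta}\le K^{(k,\ell_k)}(x_k,x_{\ell_k})\le 1$, and then invoke the optimality condition $P_k(\bB(\Lambda^*))=\mu_k$ to repeat the argument for $\lambda_k^*$. The only cosmetic difference is that the paper bounds $\max\lambda_k$ and $\min\lambda_k$ separately against $-\eta\log(v_k^\top\mathbf{1})$ while you use a direct ratio argument; your explicit handling of the initial iterate and of possible zeros of $\mu_k$ is a nice addition the paper omits.
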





The following Lemma relates the error in the dual objective value to the stopping criterion of Algorithm~\ref{algo:sinkhorn}.
\begin{lemma} \label{lem:psi2psi_star} 
    Let $\Lambda= \{\lambda_k\}_{k\in \Gamma}$, where $\lambda_k = \eta \log(u_k)$ and $u_k$ are generated by  Algorithm~\ref{algo:sinkhorn}, 
    and let $\Lambda^* = \{ \lambda_k^*\}_{k\in\Gamma}$ be a solution to \eqref{eq:ot_multi_dual}. Then it holds
    \begin{equation}
        \psi(\Lambda) - \psi(\Lambda^*) \leq \RC   \sum_{k\in \Gamma} \| P_k(\bB(\Lambda)) - \mu_k \|_1 ,
    \end{equation}
    with $ \RC =  \max_{k\in \Gamma} R_C^k $, where $R_C^k$ is defined as in Lemma~\ref{lem:bound_lambda}.
\end{lemma}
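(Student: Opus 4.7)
The plan is to exploit convexity of the dual objective $\psi$ together with the oscillation bound in Lemma~\ref{lem:bound_lambda}. A direct differentiation of \eqref{eq:ot_multi_dual} using the form \eqref{eq:opt_sol} gives $\nabla_{\lambda_k}\psi(\Lambda) = P_k(\bB(\Lambda)) - \mu_k$, and since $\eta P(\bB(\Lambda))$ is a log-sum-exp-type function of $\Lambda$, the map $\psi$ is convex. The first-order inequality for convex functions then yields
\begin{equation*}
    \psi(\Lambda) - \psi(\Lambda^*) \;\leq\; \sum_{k\in\Gamma}\langle P_k(\bB(\Lambda))-\mu_k,\, \lambda_k-\lambda_k^*\rangle.
\end{equation*}

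The natural next step is to apply Hölder's inequality to each summand, bounding it by $\|P_k(\bB(\Lambda))-\mu_k\|_1\,\|\lambda_k-\lambda_k^*\|_\infty$. The obstacle is that Lemma~\ref{lem:bound_lambda} controls only the oscillation $\max-\min$ of the iterates, not their sup-norm. I would bridge this gap using the constant-shift freedom: after any iteration of Algorithm~\ref{algo:sinkhorn}, the most recently updated projection is exact, so summing that identity over the updated coordinate gives $P(\bB(\Lambda)) = \|\mu_{k^{(t)}}\|_1 = 1$; analogously $P(\bB(\Lambda^*))=1$ since $\Lambda^*$ satisfies all marginal constraints. Consequently $P_k(\bB(\Lambda))-\mu_k$ is orthogonal to $\ones$, so one may replace $\lambda_k-\lambda_k^*$ by $\lambda_k-\lambda_k^*-c_k\ones$ for any scalar $c_k$ without changing the inner product.

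Choosing $c_k=\tfrac{1}{2}\bigl(\max_{x_k}(\lambda_k-\lambda_k^*)+\min_{x_k}(\lambda_k-\lambda_k^*)\bigr)$ makes $\|\lambda_k-\lambda_k^*-c_k\ones\|_\infty$ equal to half the oscillation of $\lambda_k-\lambda_k^*$. A triangle inequality for the oscillation semi-norm followed by two applications of Lemma~\ref{lem:bound_lambda} gives
\begin{equation*}
    \|\lambda_k-\lambda_k^*-c_k\ones\|_\infty \;\leq\; \tfrac{1}{2}\bigl(\mathrm{osc}(\lambda_k)+\mathrm{osc}(\lambda_k^*)\bigr)\;\leq\; R_C^k \;\leq\; \RC.
\end{equation*}
Plugging this into Hölder's inequality and summing over $k\in\Gamma$ then yields the claimed bound.

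The main technical hurdle is the first ingredient in the previous paragraph: noticing that the Sinkhorn renormalization, even from an infeasible iterate, forces $P(\bB(\Lambda))=1$ and hence orthogonality of the marginal error to constants. This orthogonality is exactly what converts the sup-norm appearing in Hölder into the oscillation quantity that Lemma~\ref{lem:bound_lambda} actually controls; without it, the argument would only yield a bound in terms of $\|\lambda_k\|_\infty$, which is not available in general.
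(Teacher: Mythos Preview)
Your proof is correct and follows essentially the same route as the paper: both obtain the key inequality $\psi(\Lambda)-\psi(\Lambda^*)\le \sum_{k\in\Gamma}\langle P_k(\bB(\Lambda))-\mu_k,\,\lambda_k-\lambda_k^*\rangle$ (you via the first-order convexity inequality for $\psi$, the paper via an auxiliary function $h(\widehat\Lambda)=\eta P(\bB(\widehat\Lambda))-\sum_k\widehat\lambda_k^\tT P_k(\bB(\Lambda))$, which is literally $\psi$ minus the linearization at $\Lambda$), and both then exploit the constant-shift invariance from $P(\bB(\Lambda))=1$ together with H\"older and Lemma~\ref{lem:bound_lambda}. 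The only cosmetic difference is that the paper centers $\lambda_k$ and $\lambda_k^*$ separately (each contributing $R_C^k/2$) whereas you center $\lambda_k-\lambda_k^*$ and use the triangle inequality for the oscillation; the resulting bound is identical.
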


The increment between two sequential Sinkhorn iterates is related to the stopping criterion of Algorithm~\ref{algo:sinkhorn} as described in the following.
\begin{lemma} \label{lemma:incremental psi} 
    For any $\Lambda^{(t)}$,
    let $\Lambda^{(t+1)}$ be the next iterate of the algorithm in \eqref{eq:sinkhorn}. 
    \begin{equation}\label{eq:iter_improve}
        \E \left[ \psi(\Lambda^{(t)}) - \psi(\Lambda^{(t+1)}) \right] \geq  \frac{\eta}{2 |\Gamma|^2} \left( e_t\right)^2,
    \end{equation}
    with
    \begin{equation}
        e_t:=\sum_{k \in \Gamma} \| P_k(\bB(\Lambda^{(t)})) - \mu_k \|_1.
    \end{equation}
    The expectation is over the uniform distribution of $k^{(t+1)} \in \Gamma \setminus k^{(t)}$.
\end{lemma}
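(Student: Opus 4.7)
The plan is to identify the per-iteration dual decrease $\psi(\Lambda^{(t)}) - \psi(\Lambda^{(t+1)})$ exactly as a Kullback--Leibler divergence between $\mu_j$ and $P_j(\bB(\Lambda^{(t)}))$ for the updated index $j = k^{(t+1)}$, apply Pinsker's inequality to obtain an $\ell_1$-squared quantity, then average over $j$ and use Cauchy--Schwarz to reassemble $e_t^2$.

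\textbf{Exact decrease.} Since $\lambda_k$ is unchanged for $k \neq j$ and the update \eqref{eq:sinkhorn} yields $\lambda_j^{(t+1)} - \lambda_j^{(t)} = \eta \log(\mu_j ./ P_j(\bB(\Lambda^{(t)})))$, a direct calculation from $\psi(\Lambda) = \eta P(\bB(\Lambda)) - \sum_k \mu_k^\tT \lambda_k$ gives
\begin{equation}
\psi(\Lambda^{(t)}) - \psi(\Lambda^{(t+1)}) = \eta\bigl[P(\bB(\Lambda^{(t)})) - P(\bB(\Lambda^{(t+1)}))\bigr] + \eta \sum_{x_j} \mu_j(x_j) \log\!\frac{\mu_j(x_j)}{[P_j(\bB(\Lambda^{(t)}))](x_j)}.
\end{equation}
The crucial observation is that $\ones^\tT P_k(\bB) = P(\bB)$ for every $k$, so each Sinkhorn step at index $k^{(s)}$ enforces $P(\bB(\Lambda^{(s+1)})) = \ones^\tT \mu_{k^{(s)}} = 1$. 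Applied to both $\Lambda^{(t)}$ and $\Lambda^{(t+1)}$ (the first iteration handles the base case), this makes the bracketed term vanish and each $P_k(\bB(\Lambda^{(t)}))$ a genuine probability distribution. Hence the decrease equals $\eta\,\KL(\mu_j \| P_j(\bB(\Lambda^{(t)})))$ with the standard KL, and Pinsker's inequality gives the lower bound $\tfrac{\eta}{2}\|\mu_j - P_j(\bB(\Lambda^{(t)}))\|_1^2$.

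\textbf{Averaging and Cauchy--Schwarz.} Taking expectation over $j$ uniform on $\Gamma \setminus k^{(t)}$, I would use that the previous Sinkhorn step at $k^{(t)}$ enforced $P_{k^{(t)}}(\bB(\Lambda^{(t)})) = \mu_{k^{(t)}}$, so this index contributes zero to $e_t$ and the remaining sum has $|\Gamma|-1$ terms. Cauchy--Schwarz then gives $\sum_{j \neq k^{(t)}} \|\mu_j - P_j(\bB(\Lambda^{(t)}))\|_1^2 \geq e_t^2/(|\Gamma|-1)$, and combined with the averaging factor $1/(|\Gamma|-1)$ this yields the lower bound $\frac{\eta}{2(|\Gamma|-1)^2} e_t^2 \geq \frac{\eta}{2 |\Gamma|^2} e_t^2$, as claimed.

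\textbf{Main obstacle.} The only non-routine step is establishing the normalization $P(\bB(\Lambda^{(t)})) = 1$: without it the dual decrease contains a generalized KL (Bregman) correction term and Pinsker's inequality no longer applies directly, forcing a more delicate estimate. This normalization is precisely the conceptual payoff of choosing the entropy regularizer relative to $\bM$ highlighted in Remark \ref{rem:regularizer}; everything else is routine linear algebra.
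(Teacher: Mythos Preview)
Your proof is correct and matches the paper's approach exactly: identify the per-step decrease as $\eta\,\KL(\mu_j \,\|\, P_j(\bB(\Lambda^{(t)})))$ via the normalization $P(\bB(\Lambda^{(t)}))=1$, apply Pinsker, average over the random index, and finish with Cauchy--Schwarz (the paper likewise obtains $\tfrac{\eta}{2(|\Gamma|-1)^2}e_t^2$ before relaxing to $|\Gamma|^2$). One minor remark: the normalization $P(\bB)=1$ after any Sinkhorn update holds regardless of whether the entropy is taken relative to $\bM$, so the simplification highlighted in Remark~\ref{rem:regularizer} really pertains to Lemma~\ref{lem:bound_lambda} rather than to this step.
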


We are now ready to state our first main result, which gives two probabilistic bounds on the required number of iterations in Algorithm~\ref{algo:sinkhorn}. 
\begin{theorem}
    \label{theo:t bound}
For sufficiently small $\eta$,
    Algorithm \ref{algo:sinkhorn} generates a tensor $\bB(\Lambda^{(t)})$ satisfying
    \begin{equation*}
        \sum_{ k \in \Gamma} \| P_k(\bB(\Lambda^{(t)})) - \mu_k \|_1 \leq \epsilon',
    \end{equation*}
    within $\tau$ iterations, where
    \begin{equation*}
        \E[\tau] \leq \frac{8 |\Gamma|^2 \RC }{\eta \epsilon'}.
    \end{equation*}
    Moreover, for any $\delta \in (0,0.5),$ 
    it holds that
    \begin{equation*}
        \P\left( \tau \leq \frac{48 |\Gamma|^2 \RC }{\eta \epsilon'} \log \frac{1}{\delta}\right) \geq 1-{\delta}.
    \end{equation*}
\end{theorem}
\begin{proof}[Proof sketch (see supplementary material for details)]
Define the stopping time $\tau:=\min \left\{t: e_t \leq \epsilon'\right\}$.
    Let $\{\mF_t := \sigma \left(\Lambda^{(1)},\ldots,\Lambda^{(t)} \right)\}_t$ be  the natural filtration.
    By Lemma \ref{lem:psi2psi_star} and Lemma \ref{lemma:incremental psi},
    \begin{align*}
       & \E \left[ \psi(\Lambda^{(t)}) - \psi(\Lambda^{(t+1)}) |\mF_t, t < \tau \right] \\
        & \geq  \frac{\eta}{2 |\Gamma|^2} \left(\max\left\{\frac{\psi(\Lambda^{(t)})-\psi(\Lambda^*)}{\RC }, \epsilon' \right\}\right)^2,
    \end{align*}
    Let
    $\tau_1$ be the first iteration when $ \psi(\Lambda^{(t)})-\psi(\Lambda^*) \leq \RC \epsilon'$ and $\tau_2:=\tau-\tau_1 \geq 0$.
    We can bound $\tau_1$ and $\tau_2$ as
    \begin{equation*}
        \E[\tau_1]  \leq \frac{6 |\Gamma|^2 \RC}{\eta\epsilon'} - 1, \mbox{ and }\; 
        \E[\tau_2] 
        \leq \frac{2|\Gamma|^2 \RC}{\eta\epsilon'} +1.
    \end{equation*}
    Summing up the two bounds results in the bound for $\E[\tau]$. The bound in probability follows similarly.\end{proof}

\begin{remark}
High probability bounds are often used in machine learning algorithms when randomness is involved.
Due to the logarithmic dependence $\log (1/\delta)$ in terms of the probability $1-\delta$, the high probability bound can safely be used as a surrogate of the deterministic bound. 
\end{remark}

In order to provide the complexity on the full method in Algorithm~\ref{algo:complete} we need the following two lemmas, which deal with the rounding method in Algorithm~\ref{algo:round}.
\begin{lemma} \label{lem:round bound}
    Let $\bB \in \R^{n^m}$, where $m\geq 3$, be a nonnegative $m$-mode tensor and $\{\mu_k\}_{k \in \Gamma}$ be a sequence of probability vectors, Algorithm \ref{algo:round} returns $\widehat{\bB}$ satisfying $P_k(\bB)=P_k(\widehat{\bB})$, for $k \in \Gamma$, and $P_k(\widehat{\bB})=\mu_k$, for $k \in \Gamma$.
    Moreover, it holds that
    \begin{equation*}
        \langle \bC, \bB\rangle-\langle\bC, \widehat{\bB}\rangle \leq  2\sum_{k\in \Gamma} \|C^{(k,\ell_k)}\|_{\infty} \|\mu_k-P_k({\bB}) \|_1,
    \end{equation*}
    where $\ell_k$ is the unique neighbour of $k$, for each $k \in \Gamma$.
\end{lemma}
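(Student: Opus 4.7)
The plan is to exploit the graph-structured (edge-decomposable) form of the cost together with the bi-marginal rounding guarantee from \cite{altschuler2017near}. Since the cost satisfies $\bC(\bx)=\sum_{(k_1,k_2)\in E}C^{(k_1,k_2)}(x_{k_1},x_{k_2})$, the inner product decomposes edgewise:
\begin{equation*}
    \langle \bC,\bB\rangle = \sum_{(k_1,k_2)\in E}\langle C^{(k_1,k_2)}, P_{k_1,k_2}(\bB)\rangle,
\end{equation*}
and analogously for $\widehat\bB$. By the construction in Algorithm~\ref{algo:round}, the edge projections of $\widehat\bB$ and $\bB$ agree on every edge not incident to a leaf, so only the leaf edges $(k,\ell_k)$ with $k\in\Gamma$ contribute to the difference.

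The second step is to bound each leaf-edge contribution by $\|C^{(k,\ell_k)}\|_\infty \|\widehat{\bB}_{k,\ell_k}-\bB_{k,\ell_k}\|_1$ via the elementary inequality $\langle C,A\rangle\le \|C\|_\infty\|A\|_1$ (which is $\ell_\infty$--$\ell_1$ Hölder). For the rounding error $\|\widehat{\bB}_{k,\ell_k}-\bB_{k,\ell_k}\|_1$, I would invoke the guarantee of \cite[Algorithm~2]{altschuler2017near}: rounding a nonnegative matrix with row/column marginals $(r,c)$ to match targets $(r',c')$ produces an output within $\ell_1$-distance $2(\|r-r'\|_1+\|c-c'\|_1)$ of the input. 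Applied to our call with target marginals $(P_{\ell_k}(\bB),\mu_k)$ and current marginals $(P_{\ell_k}(\bB_{k,\ell_k}), P_k(\bB_{k,\ell_k}))=(P_{\ell_k}(\bB),P_k(\bB))$, the $\ell_k$-side contribution vanishes and only $2\|\mu_k-P_k(\bB)\|_1$ remains. Summing over $k\in\Gamma$ gives the claimed bound.

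For the marginal consistency claims, I would argue as follows. For each $k\in\Gamma$, the rounding step explicitly enforces $\widehat{\bB}_{k,\ell_k}\in\Pi(P_{\ell_k}(\bB),\mu_k)$, so the $k$-marginal of the edge-tensor is $\mu_k$, which yields $P_k(\widehat\bB)=\mu_k$. Simultaneously the $\ell_k$-marginal is preserved as $P_{\ell_k}(\bB)$, so all tree-consistency conditions at internal nodes (that adjacent edge projections agree on their shared vertex marginal) continue to hold, ensuring $\widehat\bB$ is well-defined as a tree-structured tensor and $P_k(\widehat\bB)=P_k(\bB)$ for $k\notin\Gamma$ (which I read as the intended content of the first equality in the statement).

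The main technical point is the correct bookkeeping when invoking \cite{altschuler2017near}: one needs to verify that the input marginals fed into the bi-marginal rounding are exactly $(P_{\ell_k}(\bB),P_k(\bB))$ and that the target $P_{\ell_k}(\bB)$ is identical on one side, so the rounding error collapses to a single $\|\mu_k-P_k(\bB)\|_1$ term (and the factor $2$ is exactly the one appearing in the final bound). Beyond that, the proof is bookkeeping around the edgewise decomposition and Hölder's inequality.
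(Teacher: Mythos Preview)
Your proposal is correct and follows essentially the same approach as the paper: edgewise decomposition of $\langle\bC,\cdot\rangle$, cancellation on all non-leaf edges, H\"older's inequality on the remaining leaf edges, and then the $\ell_1$ guarantee of \cite[Lemma~7/Algorithm~2]{altschuler2017near} with one side of the marginal error vanishing. You also correctly diagnose the apparent typo in the statement (the first equality should read $P_k(\bB)=P_k(\widehat\bB)$ for $k\notin\Gamma$), which the paper's own proof does not explicitly address.
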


\begin{lemma} 
    \label{lem:Bhat_Bstar}
    Let $\widetilde{\bB}$ be the output of Algorithm \ref{algo:sinkhorn}, let $ \widehat{\bB} $ be the output of Algorithm~\ref{algo:round} with input $(\widetilde{\bB}, \{\mu_k\})$, and let $\bB^*$ denote the optimal solution to the unregularized MOT problem \eqref{eq:unregu mot}. Then it holds that
    \begin{align*}
        \<\bC, \widehat{\bB}\>-\<\bC, \bB^*\> \leq & \ m \eta \log(n) \\
        & + 4 \sum_{k\in \Gamma} \|C^{(k,\ell_k)}\|_{\infty} \|\mu_k - P_k(\widetilde{\bB})\|_1.
    \end{align*}
\end{lemma}

We now have the tools to state our new complexity bound for finding $\epsilon$-approximate solutions to tree-structured MOT problems.  Denote by $d(G)$ the maximum distance of two nodes in the graph $G$.
\begin{theorem}\label{thm:opercomplexity}
    Algorithm~\ref{algo:complete} finds an $\epsilon$-approximate solution to the tree-structured MOT problem \eqref{eq:unregu mot} in $T$ arithmetic operations, where
    \begin{equation*}
        \E[T] = \mathcal{O} \left( \frac{d(G) m |\Gamma|^2 n^2 (\RC)^2 \log(n)}{\epsilon^2} \right).
    \end{equation*}
    Moreover, for all $\delta \in (0,0.5)$ it holds that
    \begin{equation*}
        \P\left(  T \leq \frac{ c d(G) m |\Gamma|^2 n^2 (\RC)^2 \log(n) \log(1/\delta) }{\epsilon^2}  \right) \geq 1-{\delta}
    \end{equation*}
    where $c$ is a universal constant.
%
\end{theorem}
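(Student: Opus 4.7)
The plan is to decompose the total cost $T$ into (per-iteration cost of SINKHORN\_BP) $\times$ (number of iterations) $+$ (cost of ROUND), then combine these with the approximation guarantee from Lemma~\ref{lem:Bhat_Bstar} and the parameter choices in Algorithm~\ref{algo:complete}.

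First I would count the per-iteration cost of Algorithm~\ref{algo:sinkhorn}. As noted after \eqref{eq:message_updates}, when the current marginal $k^{(t)}$ is picked, only the messages along the unique tree path from $k^{(t-1)}$ to $k^{(t)}$ need to be recomputed; the other cached messages remain valid. Each such message update via \eqref{eq:message_updates_a}--\eqref{eq:message_updates_b} is a matrix-vector multiplication with the $n \times n$ kernel $K^{(k,\ell)}$, costing $\mathcal{O}(n^2)$. Since any path in $G$ has length at most $d(G)$, one iteration of SINKHORN\_BP costs $\mathcal{O}(d(G)\, n^2)$ operations. Evaluating $u_{k^{(t)}}^{(t+1)}$ and its contribution to the stopping test from the incoming message is an additional $\mathcal{O}(n)$ and is absorbed.

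Next I would plug the parameter choices $\eta = \epsilon/(2m\log n)$ and $\epsilon' = \epsilon/(8\RC)$ into Theorem~\ref{theo:t bound} to get
\begin{equation*}
    \E[\tau] \leq \frac{8|\Gamma|^2 \RC}{\eta\epsilon'} = \mathcal{O}\!\left(\frac{m|\Gamma|^2 (\RC)^2 \log n}{\epsilon^2}\right),
\end{equation*}
and analogously the high-probability bound $\tau \leq c'\, m|\Gamma|^2 (\RC)^2 \epsilon^{-2} \log n \log(1/\delta)$ with probability at least $1-\delta$. Multiplying by the per-iteration cost yields the claimed $\mathcal{O}(d(G) m |\Gamma|^2 n^2 (\RC)^2 \log(n)/\epsilon^2)$ operations for SINKHORN\_BP. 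Algorithm~\ref{algo:round} performs $|\Gamma|$ invocations of the $n\times n$ bi-marginal rounding from \cite{altschuler2017near}, each of cost $\mathcal{O}(n^2)$, contributing $\mathcal{O}(|\Gamma| n^2)$, which is strictly dominated by the Sinkhorn cost.

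It remains to verify that the output is genuinely $\epsilon$-approximate, so the stopping criterion $\epsilon'$ is the right accuracy. By Lemma~\ref{lem:Bhat_Bstar},
\begin{equation*}
    \langle\bC,\widehat\bB\rangle - \langle\bC,\bB^*\rangle \leq m\eta\log n + 4\RC \sum_{k\in\Gamma}\|\mu_k - P_k(\widetilde\bB)\|_1 \leq m\eta\log n + 4\RC\,\epsilon',
\end{equation*}
and the choices $\eta = \epsilon/(2m\log n)$, $\epsilon' = \epsilon/(8\RC)$ make each of the two terms at most $\epsilon/2$, so $\widehat\bB$ is $\epsilon$-approximate deterministically on the event that Algorithm~\ref{algo:sinkhorn} terminates with $\sum_k \|P_k(\bB(\Lambda^{(t)}))-\mu_k\|_1 \leq \epsilon'$. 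Since Algorithm~\ref{algo:round} preserves $P_k(\widehat\bB)=\mu_k$ for $k\in\Gamma$, $\widehat\bB$ is feasible, so only the iteration count is random and the expectation and high-probability statements for $T$ follow directly from those for $\tau$.

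The only slightly nontrivial step, and the place I would be most careful, is the per-iteration accounting: one must justify that no recomputation outside the path from $k^{(t-1)}$ to $k^{(t)}$ is needed, which relies on the tree structure and the fact that scaling vectors $u_k$ with $k\neq k^{(t)}$ are unchanged, so messages not crossing the updated path are unaffected. Everything else is bookkeeping on top of Theorem~\ref{theo:t bound} and Lemma~\ref{lem:Bhat_Bstar}.
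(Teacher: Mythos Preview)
Your proposal is correct and follows essentially the same approach as the paper's proof: bound the per-iteration cost of SINKHORN\_BP by $\mathcal{O}(d(G)n^2)$ via message updates along the tree path, invoke Theorem~\ref{theo:t bound} with the parameter choices $\eta=\epsilon/(2m\log n)$ and $\epsilon'=\epsilon/(8\RC)$ to bound the iteration count, add the $\mathcal{O}(|\Gamma|n^2)$ cost of ROUND, and use Lemma~\ref{lem:Bhat_Bstar} to certify the $\epsilon$-approximation. If anything, you are slightly more explicit than the paper in spelling out the $\epsilon$-approximation bookkeeping and the feasibility of $\widehat\bB$.
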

\begin{proof}
    With the specific choices $\eta=\frac{\epsilon}{2 m \log(n)}$ and $\epsilon'= \frac{\epsilon }{8 \RC}$
    we get $\langle \bC, {\widehat{\bB}} \rangle -\langle \bC, {\bB^*} \rangle \leq \epsilon$.
    By Theorem~\ref{theo:t bound}, the stopping time $\tau$ satisfies
    \begin{align*}
         \E [\tau] = \frac{8 |\Gamma|^2 \RC}{\eta \epsilon'} = \mathcal{O} \left(\frac{ m |\Gamma|^2 (\RC)^2 \log(n)}{\epsilon^2} \right).
    \end{align*}

    Since in each iteration of Algorithm~\ref{algo:sinkhorn} the messages between two leave nodes of the tree are updated, and each message update is of complexity $\mathcal{O}(n^2)$, one iteration takes at most $\mathcal{O}( d(G) n^{2}) $ operations. Thus, in expectation, a solution is achieved in
    \begin{equation}
        \mathcal{O} \left(\frac{ d(G)  n^2 m |\Gamma|^2 (\RC)^2 \log(n)}{\epsilon^2} \right)
    \end{equation}
    operations.
    Algorithm \ref{algo:round} takes $\mathcal{O}(|\Gamma|n^2)$ (see Lemma 7 in \citet{altschuler2017near}). Hence, the bound on $\E[T]$ follows. The bound in probability follows similarly.
\end{proof}

\section{Extension to general graphs}
For a general graph, we cannot directly apply the belief propagation algorithm. One way to tackle this is to construct a tree factorization over the graph. 
A junction tree (also called tree decomposition) describes a partitioning of a graph, where several nodes are clustered together, such that the interactions between the clusters can be described by a tree.
A cluster $c$ is a collection of nodes, and we write $\bx_c = \{ x_k, k\in c\}$. Moreover, the matrices $K^{(k_1,k_2)}= \exp(- C^{(k_1,k_2)}/\eta)$, for $(k_1,k_2)\in E$, can be understood as pair-wise potentials. A junction tree is then defined as follows.
\begin{defn}
A junction tree $\mathcal{T}=(\mathcal{C},\mathcal{E})$ over a graph $G=(V,E)$ is a tree whose nodes $c\in \mathcal{C}$ are associated with subsets $\bx_c \subset V$, and that satisfies the following properties:\\[-20pt]
\begin{itemize}
\setlength{\itemsep}{1pt}
\setlength{\parskip}{1pt}
    \item Family preservation: For each potential $K$ there is a cluster $c$ such that $\text{domain}(K) \subset \bx_c$.
    \item Running intersection: For every pair of clusters $c_i, c_j \in \mathcal{C}$, every cluster on the path between $c_i$ and $c_j$ contains $\bx_{c_i} \cap \bx_{c_j}$.
\end{itemize}
For two adjoining clusters $c_i$ and $c_j$, we define the separation set $S_{ij}= \{ v\in V : v\in c_i \cap c_j \} $.
\end{defn}
It is often practical to find a junction tree that is as similar to a tree as possible. A measure of this is given by the following definition.
\begin{defn}
 For a junction tree $\mathcal{T}=(\mathcal{C},\mathcal{E})$, we define its width as $$\text{width}(\mathcal{T}) = \max_{c\in \mathcal{C}} |c| -1.$$
 For a graph $G$, we define its tree-width as $$w(G) = \min\{ \text{width}(\mathcal{T}) \ |\ \mathcal{T} \text{ is a junction tree for } G\}.$$
\end{defn}
Based on this partitioning, we achieve the following complexity bound for general graph-structured MOT problems.
The derivation of the modified algorithm is deferred to \jiao{Section \ref{sec:general_graph}.}
%
\begin{theorem}\label{thm:general_complexity}
  Let $R = \max_{k\in \Gamma} R^k_\bC$, where $R^k_\bC = \|\bC_{c_{\ell_k}}( \bx_{c_{\ell_k}})\|_\infty$, and $c_{\ell_k}$ is the neighbouring clique to $c_k$. 
  A generalization of Algorithm~\ref{algo:complete} 
  finds an $\epsilon$-approximate solution to the general graph structured MOT problem \eqref{eq:unregu mot} in $T$ arithmetic operations, where
    \begin{equation}
    \label{eq:ET_general}
        \E[T] = \mathcal{O} \left( \frac{d(\mathcal{T}) m |\Gamma|^2 n^{w(G)+1} R^2 \log(n)}{\epsilon^2} \right).
    \end{equation}
    Moreover, for all $\delta \in (0,0.5)$ there exists a universal constant $c$ such that
    \begin{equation*}
    \begin{aligned}
        \P\left(  T \leq \frac{c d(\mathcal{T}) m |\Gamma|^2 n^{w(G)+1} R^2 \log(n) \log(1/\delta) }{\epsilon^2}  \right) \\ \geq 1-{\delta}.
    \end{aligned}
    \end{equation*}
\end{theorem}

In Algorithm \ref{algo:sinkhorn}, the per iteration complexity is not independent of the random choice of the update, and thus not independent of the number of iterations.
The results in Theorem~\ref{thm:opercomplexity} and \ref{thm:general_complexity} thus depend on the maximum iteration complexity, and can be improved by utilizing the expected (average) iteration complexity.
Therefore, let $\bar d(G)$ denote the average distance between any two nodes in $\Gamma$.
\begin{theorem}\label{thm:general_average_dist}
    A generalization of Algorithm~\ref{algo:complete} finds an $\epsilon$-approximate solution to the graph-structured MOT problem \eqref{eq:unregu mot} in $T$ arithmetic operations, where 
    \begin{equation*}
        \E[T] = \mathcal{O} \left( \frac{\bar d(\mathcal{T}) m |\Gamma|^2 n^{w(G)+1} R^2 \log(n)}{\epsilon^2} \right).
    \end{equation*}
\end{theorem}


\section{Discussion of results} \label{sec:discussion}

We consider a class of tree-structured MOT problems, which contains many MOT applications of interest.
\begin{defn}
\label{def:balanced_tree}
 Given a sequence of tree-structured MOT problems, where the number of nodes go to infinity,
 we call the sequence of such problems balanced if there is a constant $c$ such that $|\Gamma| \RC \le c\|\bC\|_\infty$.
\end{defn}
Many MOT problems that arise in practice are balanced, see 
\jiao{Section \ref{sec:balanced_tree}}
for a number of examples.
From Theorem~\ref{thm:opercomplexity} it follows that Algorithm~\ref{algo:complete} finds an $\epsilon$-approximate solution to balanced MOT problem \eqref{eq:unregu mot} in $T$ operations, where
\begin{equation}
    \E[T] = \mathcal{ O}  \left( \frac{\bar d(G) m  n^2 \|\bC\|_\infty^2 \log(n)}{\epsilon^2} \right).
\end{equation}

This lets us compare our result with the bound for general MOT problems in \citet{lin2019complexity} without acceleration, which is given by $ \mathcal{ O}  \left(  m^3  n^{m} \|\bC\|_\infty^2 \log(n)\epsilon^{-2} \right)$.
Moreover, when the MOT problem on the junction tree is balanced, 
by a similar argumentation the expectation bound in Theorem~\ref{thm:general_complexity} can be given by
\begin{equation}
 \E[T] = \mathcal{ O}  \left( \frac{\bar d(\mathcal{T}) m  n^{w(G)+1} \|\bC\|_\infty^2 \log(n)}{\epsilon^2} \right).
\end{equation}


In case the underlying graph is fully connected and not balanced we have $\bar d(\mathcal{T})=2$, $w(G)=m-1$ and $R=\|\bC\|_\infty$ in Theorem~\ref{thm:general_average_dist}.
In fact, in this case our algorithm does not exploit any graph structures, and thus the complexity is the same for general cost tensors that do not decouple into pairwise terms as in \eqref{eq:C_graph}.
Thus, the complexity of Algorithm~\ref{algo:complete} for general MOT problems matches the bound for general MOT problems in \cite{lin2019complexity}.

Consider the barycenter problem introduced in Example~\ref{exa: barycenter}. This problem is a MOT problem \eqref{eq:unregu mot} with underlying graph as illustrated in Figure~\ref{fig:barycenter_graph}. Here, $d(G)=2$, $|\Gamma|=L$, and $m=L+1$. Moreover, by \eqref{eq:barycentercost}, we have $ \RC = \frac{1}{L} \|C\|_\infty $.
Thus, Algorithm~\ref{algo:complete} is expected to return an $\epsilon-$approximate solution to problem \ref{eq:barycenter_pairwise} in $\mathcal{O}( L n^2 \|C\|_\infty^2 \log(n)\epsilon^{-2} )$.
This coincides with the best known bound for the barycenter problem \citep{kroshnin2019complexity,lin2020fixedsupport} without acceleration.
In fact, the argument can be extended to the case of non-uniform weights in the barycenter problem \eqref{eq:barycenter_pairwise}, see \jiao{Section \ref{sec:barycenter_nonuniform}.}
We also point out that the regularizer used in the Wasserstein barycenter literature is different from ours: one is pairwise regularization and one is regularization over the full tensor $\bB$. For more details on this comparison, see \citet{haasler2020tree}[Section 5].

\section{Experiments}\label{sec:exp}


We show numerical results for three types of MOT problems.
We consider the barycenter problem in Example~\ref{exa: barycenter}, which is structured according to the graph in Figure~\ref{fig:barycenter_graph}, and the Hidden Markov Model example in 
\citet[Section V.B]{haasler2021pgm}, which is structured according to the graph in Figure~\ref{fig:HMM_graph}.
In particular, these two types of problems are tree-structured.
\begin{figure}[tb]
 \centering
 \begin{tikzpicture}
  \tikzstyle{main}=[circle, minimum size = 15pt, thick, draw =black!80, node distance = 15pt]
  \node[main] (mu1) {};
  \node[main] (mu2) [right=of mu1] {};
  \node[main] (mu3) [right=of mu2] {};
  \node[] (mu4) [right=of mu3] {};
  \node[] (muTm1) [right=of mu4] {};  
  \node[main] (muT) [right=of muTm1] {};
  
  \node[main,fill=black!10] (phi1) [below=of mu1] {};
  \node[main,fill=black!10] (phi2) [right=of phi1,below=of mu2] {};
  \node[main,fill=black!10] (phi3) [right=of phi2,below=of mu3] {};
  \node[] (phi4) [right=of phi3] {};
  \node[] (phiTm1) [right=of phi4] {};
  \node[main,fill=black!10] (phiT) [right=of phi3,below=of muT] {};

  \draw[-, thick] (mu1) --   (mu2);
  \draw[-, thick] (mu2) --   (mu3);
  \draw[-, thick] (mu3) --   (mu4); 
  \draw[loosely dotted, very thick] (mu4) -- (muTm1); 
  \draw[-, thick] (muTm1) --  (muT); 
  
  \draw[-, thick] (mu1) --  (phi1);
  \draw[-, thick] (mu2) --  (phi2);
  \draw[-, thick] (mu3) --  (phi3);
  \draw[-, thick] (muT) --  (phiT);
  \draw[loosely dotted, very thick] (phi4) -- (phiTm1); 
 \end{tikzpicture}
 \caption{Graph associated with a Hidden Markov Model.}
\label{fig:HMM_graph}
\end{figure}
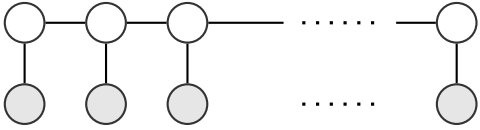
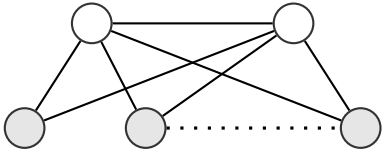
\begin{figure}[tb]
\centering
    \begin{tikzpicture}
    \tikzstyle{circ}=[circle, minimum size = 15pt, thick, draw =black!80, node distance = 30pt and 30pt]
    \node[circ, fill=black!10] (c1)  {};
	\node[circ, fill=black!10] (c2) [right=of c1] {};
	\node[] (c3) [right=of c2] {};
	\node[circ, fill=black!10] (cN) [right=of c3] {};
	\node[circ, node distance = 10mm and 5mm] (c0) [above right=of c1] {};
	\node[circ, node distance = 10mm and 5mm] (cNp1) [above left=of cN] {};
	\draw[loosely dotted, very thick] (c2) -- (cN);
	\draw[-, thick] (c0) -- (c1) -- (cNp1) -- (c0);
	\draw[-, thick] (c0) -- (c2) -- (cNp1);
	\draw[-, thick] (c0) -- (cN) -- (cNp1);
	\end{tikzpicture}
\caption{Graph associated with a Wasserstein least square problem.}
	\label{fig:W2leastsquare_factor_graph}
\end{figure}
The third example is the Wasserstein least square problem \citep{karimi2020statistical}, which is associated with the graph in Figure \ref{fig:W2leastsquare_factor_graph}.
Note that this is a graph with tree-width two.
In all the above graphs, gray nodes correspond to fixed marginals $\{\mu_i \}_{k \in \Gamma}$, and white nodes are estimated in the problem.

The cost 
matrices $C^{(k_1,k_2)}$ in \eqref{eq:C_graph} are set to be the squared Euclidean distance.
The constrained marginal distributions $\{\mu_k \}_{k \in \Gamma}$ are supported on a uniform grid with $n$ points between 0 and 1, where the values are generated from the log-normal distribution and normalized to sum to one.
We choose the accuracy $\epsilon=0.5$ in Algorithm~\ref{algo:complete}. 
As a comparison, we implemented a brutal force Sinkhorn method, which
computes the projections $P_k({\bf{B}}(\Lambda^{(t)}))$ in the Sinkhorn iterates \eqref{eq:sinkhorn} by directly summing over the elements of the tensor ${\bf{B}}(\Lambda^{(t)})$ as in \eqref{eq:proj_bruteforce}.
We use a random update rule for both methods.
The number of iterations of both brutal force Sinkhorn and Sinkhorn belief propagation are nearly
the same. 
We repeat every experiment 5 times with different random seeds and report the total run time in Figure \ref{fig:exp}. The theoretical complexity bound is also presented as dashed lines. The run time of brutal force Sinkhorn grows in a higher polynomial of $n$ and grows exponentially with respect to $m$. This coincides with the general MOT bound and our bounds in Table \ref{tab:bounds}. We can also tell our bound is a bit pessimistic about the dependence over $n$.
\begin{figure}[t]
   \centering
   \begin{subfigure}{0.5\textwidth}
       \centering
      \includegraphics[width=0.48\linewidth]{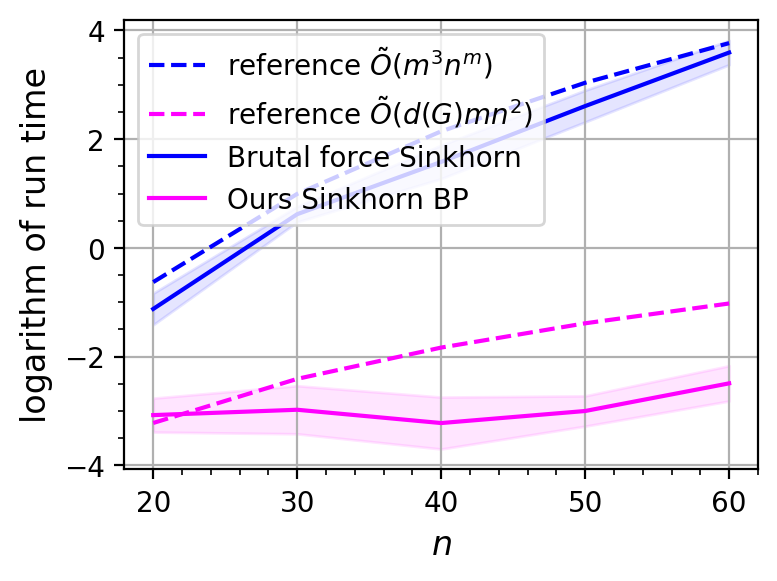}   \includegraphics[width=0.48\linewidth]{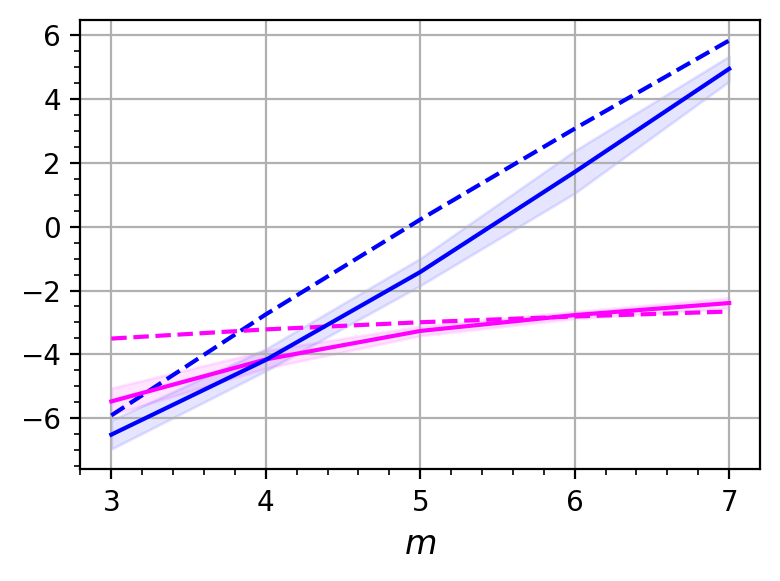}
      \caption{Barycenter.}
     \label{fig:barycenter}
   \end{subfigure}
   \begin{subfigure}{0.5\textwidth}
\centering
\includegraphics[width=0.48\linewidth]{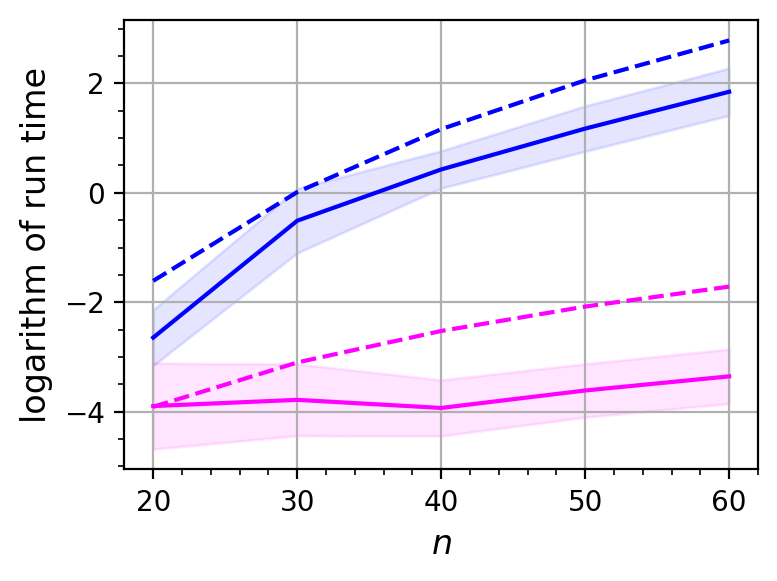}
\includegraphics[width=0.48\linewidth]{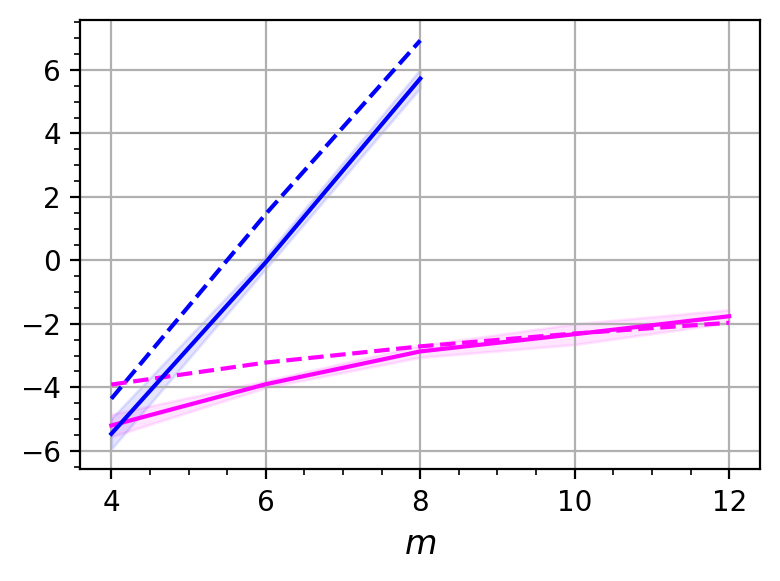} 
\caption{Hidden Markov Model.}
 \label{fig:hmm}
   \end{subfigure}
\begin{subfigure}{0.5\textwidth}
\centering
\includegraphics[width=0.48\linewidth]{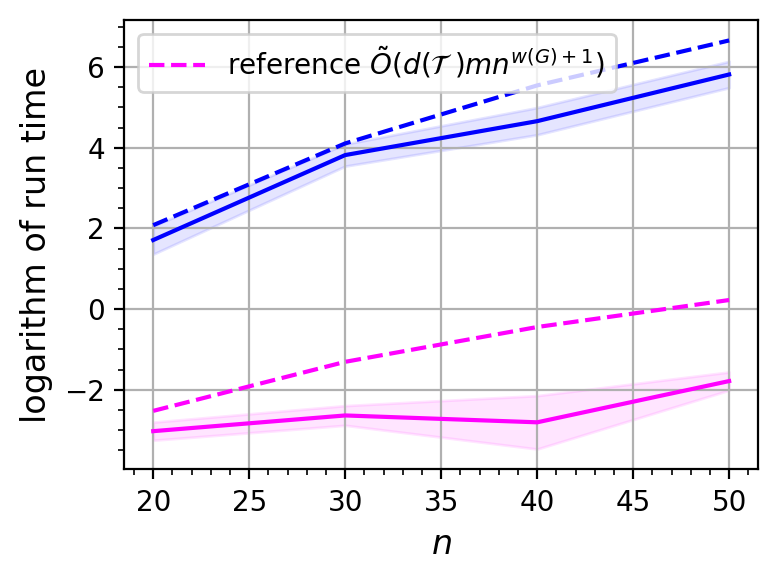}
\includegraphics[width=0.48\linewidth]{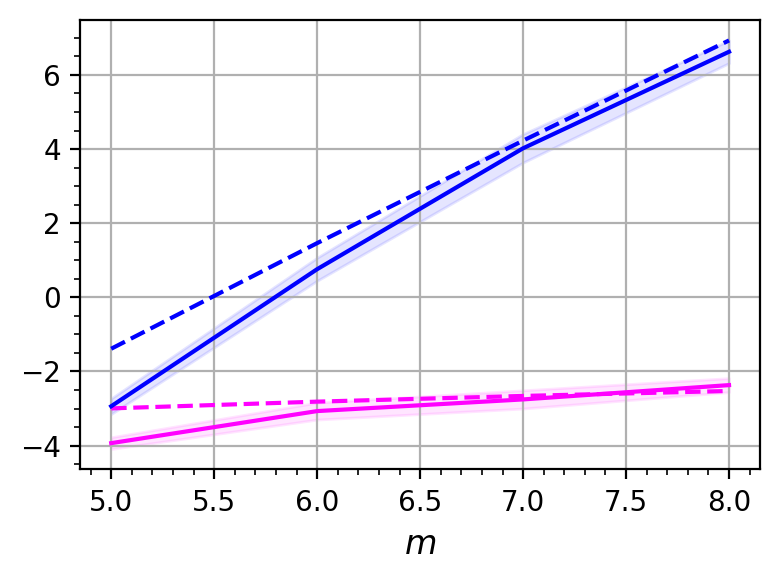} 
\caption{Wasserstein least square.}
 \label{fig:general}
\end{subfigure}
\caption{Logarithm of total run time in seconds for brutal force Sinkhorn and Sinkhorn belief propagation.
The left column shows the run time as a function of $n$ when $m$ is fixed ($m=4$ in (a) and (b); $m=5$ in (c)), and the right column vice versa with $n=10$.}
 \label{fig:exp}
\end{figure}

\section{Conclusion}
In this work we considered a class of multi-marginal optimal transport problems where the cost functions can be decomposed according to a graph. It turns out that the computational complexity of MOT can be significantly reduced by exploiting the graphical structures. More specifically, without any structure, the complexity grows exponentially as the number of marginals increases. With graphical structure, the dependence becomes polynomial.
{We provide a complexity bound $ \mathcal{\tilde O}(d(\mathcal{T})m n^{w(G)+1}\epsilon^{-2})$ for solving graph-structured MOT problems}
based on the Sinkhorn belief propagation algorithm \citep{haasler2021pgm,SinHaaZha20} with the random updating rule.
One limitation of the present work is that the proof techniques do not seem to be applicable to Sinkhorn iterations with cyclic updating rule, which is the most popular strategy used in practice. This will be a future research direction. We also plan to accelerate the Sinkhorn belief propagation algorithm using ideas from \citet{lin2019complexity,kroshnin2019complexity}. 

\bibliography{complexity_bound}
\bibliographystyle{icml2021}


\onecolumn
\newpage
\appendix


\section{The dual of the regularized MOT problem and the Sinkhorn iterations}

In this section we provide details to Section~3. In particular, we derive the dual of the regularized MOT problem and the Sinkhorn belief propagation algorithm.

The Lagrangian function of problem \eqref{eq:ot_multi_reg} is
\begin{align}
\label{eq:lag_function}
    L(\bB, \Lambda)=\langle \bC, \bB \rangle + \eta H(\bB\mid \bM) - \sum_{k\in \Gamma} \lambda_k^\tT \left(P_k(\bB) -\mu_k \right),
\end{align}
where $\Lambda=(\lambda_k)_{k\in \Gamma}$ and $\lambda_k\in \mathbb{R}^n$ for $k\in \Gamma$.  
Minimizing the Lagrangian with respect to $\bB$ gives the optimum 
\begin{align}
\label{eq:B}
    [\bB( \Lambda )] (x_1,\dots, x_m) =  \exp\left( -\bC(x_1,\dots, x_m)/ \eta\right)  \prod_{k \in \Gamma} \Big( \exp\left(\lambda_k (x_k)/\eta\right) \mu_k(x_k)  \Big),
\end{align}
and plugging this into \eqref{eq:lag_function} yields 
\begin{align*}
\inf_\bB L(\bB, \Lambda)=L(\bB(\Lambda),\Lambda)= 
-\eta P( \bB(\Lambda) ) + \sum_{k\in \Gamma} \mu_k^\text{T} \lambda_k.
\end{align*}
Therefore, the dual problem (formulated as a minimization problem) is given by
\begin{align*}
\min_{ \Lambda } \psi( \Lambda) :=
\eta P( \bB(\Lambda) ) - \sum_{k\in \Gamma} \mu_k^\text{T} \lambda_k.
\end{align*}
In each iteration the block coordinate descent algorithm picks some $k \in \Gamma$ and minimizes $\psi(\Lambda)$ over $\lambda_k$, while keeping the other variables fixed. The minimum is achieved when the gradient of $\psi$ with respect to $\lambda_k$ vanishes, i.e., when
\begin{equation}
    e^{\lambda_k(x_k) / \eta}  \mu_k(x_k) \left(\sum_{\bx \setminus x_k} e^{ -\bC(x_1,\dots, x_m)/ \eta }  \prod_{\ell \in \Gamma \setminus k} \Big( e^{\lambda_\ell (x_\ell)/\eta} \mu_\ell(x_\ell)  \Big) \right) - \mu_k(x_k) = 0.
\end{equation}
In the scaled variables $u_k = \exp( \lambda_k / \eta) $ this can be expressed as
\begin{equation}
    u_k \odot \mu_k \odot \left(  P_k( \bB(\Lambda)) ./ \left( u_k \odot \mu_k \right) \right) - \mu_k = 0.
\end{equation}
This yields the Sinkhorn updates \eqref{eq:sinkhorn}.


\section{Algorithm for MOT with general graph structure} \label{sec:general_graph}

In order to apply Sinkhorn belief propagation, we first decompose the underlying graph into a tree $\mathcal{T}=(\mathcal{C},\mathcal{E})$ with minimal tree-width.
The cost tensor $\bC$ decouples according to $\mathcal{T}$ into tensors $\bC_c$, for $c\in \mathcal C$, such that
\begin{equation}
    \bC=\sum_{c \in \mathcal{C}} \bC_c(\bx_c).
\end{equation}
The potential tensor $\bK = \exp(- \bC/ \eta)$ 
is then factorized, into tensors $\bK_c= \exp(- \bC_c / \eta)$, for $c\in \mathcal{C}$, and can be written as
\begin{equation}
    \bK(\bx) = \prod_{c \in \mathcal{C}} \bK_c(\bx_c).
\end{equation}
%
To apply Algorithm~\ref{algo:sinkhorn}, the constraints have to be given on the leaf nodes of the tree. 
Thus, we define the junction tree such that the all leaves are clusters containing only one vertex and correspond to the set $\Gamma$. We denote this set of leaf cliques by $\Gamma_\mathcal{C}$.
In particular, note that then $S_{\ell_k k}= x_k$, if $c_k\in \Gamma_\mathcal{C}$, and $c_{\ell_k}$ is its unique neighbour clique.
The Sinkhorn iterations are then of the form \eqref{eq:sinkhorn}, where the projections on the marginals $c_k \in \Gamma_\mathcal{C}$, with neighbour clique 
$c_{\ell_k}\in \mathcal{C}$, are computed as
\begin{equation} \label{eq:proj_graph}
    [ P_k( \bB(\Lambda^{(t)}) )](x_k) = u_k^{(t)} (x_k) \mu_k(x_k) m_{\ell_k \to k} ( x_k ).
\end{equation}
Here, the messages between clusters of the junction tree are given by
\begin{subequations}\label{eq:message_clusters}
\noeqref{eq:message_clusters_a,eq:message_clusters_b}
\begin{align}
   m_{\ell\to k} (S_{\ell k}) & = \sum_{\bx_{c_\ell} \setminus S_{\ell k}} \bK_{c_\ell}(\bx_{c_\ell}) \prod_{j \in N(\ell)\setminus k} m_{j \to \ell} (S_{j \ell}), \quad \text{ if } c_\ell \notin \Gamma_\mathcal{C} \label{eq:message_clusters_a} \\
    m_{\ell \to k} (x_\ell ) & =   u_\ell^{(t)}(x_\ell) \mu_\ell(x_\ell), \qquad \text{ if } c_\ell \in \Gamma_\mathcal{C}. \label{eq:message_clusters_b}
\end{align}
\end{subequations}

It follows that the Sinkhorn iterations \eqref{eq:sinkhorn} with the projections \eqref{eq:proj_graph} read, as before,
\begin{equation}
    u_k^{(t+1)}(x_k) \leftarrow ( m_{\ell_k \to k} (x_k) )^{-1}.
\end{equation}
Algorithm~\ref{algo:sinkhorn} can thus simply be modified to general graphs by replacing the messages \eqref{eq:message_updates} by the messages \eqref{eq:message_clusters}.
This lets us formulate the result in Theorem~\ref{thm:general_complexity}.

 \section{Details on the discussion of results in Section~\ref{sec:discussion}}

 This Section provides details on the discussion of the results.

\subsection{Balanced MOT problems}
\label{sec:balanced_tree}


There are many structured MOT problems of interest that are balanced.
In the following we check the condition in Definition~\ref{def:balanced_tree} for a few special cases.

\begin{example}
The Wasserstein barycenter problem discussed in Example~\ref{exa: barycenter} 
is balanced. With the barycenter cost tensor $\bC$ defined in \eqref{eq:barycentercost} it holds $\|\bC\|_\infty = \| C\|_\infty$, and thus $|\Gamma| \RC = L \frac{1}{L} \|C\|_\infty = \|\bC\|_\infty$.
\end{example}

\begin{example} \label{ex:barycenter_balanced}
A tree-structured MOT problem where the costs on all edges are equal and symmetric is balanced.
Note that if $C^{(k_1,k_2)}$, for all $(k_1,k_2)\in E$, are equal and symmetric, then $\|\bC\|_\infty= |E| \RC$. Thus, it holds
\begin{equation}
    |\Gamma| \RC = \frac{|\Gamma|}{|E|} \| \bC\|_\infty \leq \| \bC\|_\infty.
\end{equation}
The barycenter case in Example~\ref{exa: barycenter} is a special case of this.
\end{example}

\begin{example}
Consider a tree-structured MOT problem, where the shortest distance between any two leaf nodes is $3$, and the maximum cost entries on the edges connecting to the leaf nodes are of the same order.
Such a problem is balanced.
Let $\|C^{(k,\ell_k)}\|_\infty$ be of the same order for all $k \in \Gamma$, where $\ell_k$ is the neighbour of $k$. Then there is a constant $c$ such that
\begin{equation}
\RC = \max_{k \in \Gamma} \|C^{(k,\ell_k)} \|_\infty \leq \frac{c}{|\Gamma|} \sum_{k \in \Gamma} \|C^{(k,\ell_k)} \|_\infty. 
\end{equation}
If the shortest distance between any two leaf nodes is $3$, there is no node that has two leaf nodes as neighbours. Thus, it holds
\begin{equation}
     \sum_{k \in \Gamma} \|C^{(k,\ell_k)} \|_\infty \leq \|\bC\|_\infty.
\end{equation}
Hence, it follows $|\Gamma| \RC \leq c \|\bC\|_\infty.$


\end{example}

\begin{example}
Consider a tree-structured MOT problem with cost tensor $\bC$.
Let $\tilde \bx = (\tilde x_1,\dots, \tilde x_m)$ be a maximizer of $\bC(\bx)$, that is $\bC(\tilde \bx) = \|\bC\|_\infty$, and assume that
\begin{equation} \label{eq:C_balance}
  \left| \frac{ \max_{k\in \Gamma} \|C^{(k,\ell_k)}\|_{\infty} }  {\min_{ k \in \Gamma }   C^{(k,\ell_k)}( \tilde x_{k_1}, \tilde x_{k_2}) } \right| \leq c
\end{equation}
for some constant $c$. Then the MOT problem is balanced.
To see this note that 
\begin{equation}
     \|\bC\|_\infty =\sum_{(k_1,k_2)\in E} C^{(k_1,k_2)}( \tilde x_{k_1}, \tilde x_{k_2})  \geq |E| \min_{(k_1,k_2)\in E}   C^{(k_1,k_2)}( \tilde x_{k_1}, \tilde x_{k_2}).
\end{equation}
Thus, it follows
\begin{equation}
     |\Gamma| \RC = |\Gamma| \max_{k \in \Gamma} \|C^{(k,\ell_k)}\|_\infty  \leq c |\Gamma| \min_{ k \in \Gamma }   C^{(k,\ell_k)}( \tilde x_{k_1}, \tilde x_{k_2}) 
 \leq c  \frac{|\Gamma|}{|E|}\|\bC\|_\infty 
 \leq  c \|\bC\|_\infty .
\end{equation}

\end{example}

\subsection{Barycenter problem with nonuniform weights} \label{sec:barycenter_nonuniform}

We provide a complexity bound for the barycenter problem with nonuniform weights.
Let $w_\ell$ be the weight and $C_\ell$ be the cost matrix for the $\ell$-th term in the barycenter problem \eqref{eq:barycenter_pairwise}.
Then the cost tensor in the corresponding MOT probolem is given by
 $${\bf C}(x_1,\dots,x_L,x_{L+1}) = \sum_{\ell=1}^{L} w_\ell C_\ell(x_{L+1}, x_{\ell}).$$
 With this cost, the bound in Lemma~\ref{lem:psi2psi_star} becomes
 $$\psi(\Lambda)-\psi(\Lambda^*)\le R \sum_k w_k\|P_k({\bf B}(\Lambda))-\mu_k\|_1,\quad \text{ where } R = \max_\ell \|C_\ell\|_\infty.$$ 
 Now, if in Algorithm \ref{algo:sinkhorn} we pick the next update according to the weight $w_1, w_2,\ldots, w_L$ instead of a uniform distribution, then the bound in Lemma~\ref{lemma:incremental psi} becomes 
$$\mathbb{E} \left[ \psi(\Lambda^{(t)}) - \psi(\Lambda^{(t+1)}) \right] \geq  \frac{\eta}{2} \left( e_t\right)^2, \quad \text{ with } e_t= \sum_{k \in \Gamma } w_k\| P_k(\mathbf{B}(\Lambda^{(t)})) - \mu_k \|_1.$$
The bound in Theorem \ref{theo:t bound} then becomes $\mathcal{O}(\frac{R}{\eta \epsilon'})$. Putting everything together, the iteration complexity becomes $\tilde{\mathcal{O}} (\frac{m R^2}{\epsilon^2})$ and the arithmetic complexity becomes $\tilde{\mathcal{O}} (\frac{m n^2 R^2}{\epsilon^2})$, which match the results in \citet{kroshnin2019complexity}. 

\section{Deferred proofs}

In this section we provide the proofs that are omitted in the main paper.

\subsection{Proof of Lemma \ref{lem:bound_lambda}}

\begin{proof}

    Denote $ v_k(x_\ell) = \prod_{j\in N(\ell) \setminus k} m_{j \to \ell} (x_\ell)$, where $\ell \in N(k)$ is the unique neighbour of $k$, since $k$ is a leaf of the tree.
    Assume variable $u_k$ was updated in the previous step of the algorithm. Then it holds
    \begin{equation}
        u_k(x_k) = 1 / m_{\ell \to k} (x_k)  = 1 / \left(
        K^{(k, \ell)}
        v_k \right) .
    \end{equation}
    Thus,
    \begin{equation} \label{eq:lambda_bound_max}
        \max_{x_k} \lambda_k (x_k) \leq - \eta \log\left( e^{ -\|C^{(k,\ell)}\|_\infty/\eta} v_k^T \ones  \right)
        =   \|C^{(k,\ell)}\|_\infty - \eta \log\left( v_k^T \ones \right).
    \end{equation}
    Moreover,
    \begin{equation}  \label{eq:lambda_bound_min}
        \min_{x_k} \lambda_k (x_k) \geq - \eta  \log \left(   v_k^T \ones   \right) .
    \end{equation}

    Combining \eqref{eq:lambda_bound_max} and \eqref{eq:lambda_bound_min} it follows
    \begin{equation*}
        \max_{x_k} \lambda_k (x_k)  -\min_{x_k} \lambda_k (x_k) \leq  \|C^{(k,\ell)}\|_\infty .
    \end{equation*}

    Note that the gradient of $\psi(\cdot)$ vanishes in $\Lambda^*$, since it is optimal to \eqref{eq:ot_multi_dual}. Thus, it holds $P_k(\bB(\Lambda^*)) = \mu_k$ for $k=1,\dots,m$ and the bound for $\lambda_k^*$ follows in the same way as before.
   
\end{proof}

\subsection{Proof of Lemma \ref{lem:psi2psi_star}}
\begin{proof}

Note that
\begin{equation} \label{eq:psi_tilde_proof}
\begin{aligned}
     & {\psi}(\Lambda)-{\psi}(\Lambda^*) 
    = \eta P( \bB(\Lambda) ) - \sum_{k\in \Gamma} \mu_k^\text{T} \lambda_k   - \eta P( \bB(\Lambda ^*) ) + \sum_{k\in \Gamma} \mu_k^\text{T} \lambda_k^*  \\
    & \quad =  \eta P( \bB(\Lambda) ) - \sum_{k\in \Gamma}  \lambda_k^\tT P_k( \bB(\Lambda) )   - \eta P( \bB(\Lambda ^*) ) + \sum_{k\in \Gamma}    (\lambda_k^*)^\tT  P_k( \bB(\Lambda) )  + \sum_{k\in \Gamma}   (\lambda_k-\lambda_k^*)^\tT \left( P_k( \bB(\Lambda) )-\mu_k \right).
\end{aligned}
\end{equation}
Consider the convex function of $\widehat \Lambda = \{ \widehat{\lambda}_k \}_{k\in \Gamma}$ given by
\begin{align*}
 h(\widehat{\Lambda})= \eta P( \bB(\widehat{\Lambda}) ) -  \sum_{k\in \Gamma}  \widehat\lambda_k^\tT P_k( \bB(\Lambda) ).
\end{align*}
Note that its gradient vanishes if and only if $\widehat{\Lambda}= \Lambda$, since $\nabla_{\widehat{\lambda}_k} h=P_k( \bB(\widehat{\Lambda}) ) - P_k( \bB({\Lambda}) ) =0$. Thus, $ \Lambda$ is the minimizer of $h$, and it follows with \eqref{eq:psi_tilde_proof} that
\begin{equation}
\label{eq: sum Pk-mu}
\psi(\Lambda)-\psi(\Lambda^*)  \leq \sum_{k\in \Gamma}   (\lambda_k-\lambda_k^*)^\tT \left( P_k( \bB(\Lambda) )-\mu_k \right).
\end{equation}
Define $\bar \lambda_k=  \frac{1}{2}( \max_{x_k} \lambda_k (x_k) + \min_{x_k} \lambda_k (x_k)
)$, and note that $\bar \lambda_k^\tT \left( P_k( \bB(\Lambda) )-\mu_k \right)=0$.
By \holder inequality and Lemma \ref{lem:bound_lambda} , it holds
\begin{align}
\lambda_k^\tT \left[ P_k( \bB(\Lambda) )-\mu_k \right]
= & (\lambda_k - \bar \lambda_k)^\tT \left( P_k( \bB(\Lambda) )-\mu_k \right) \\
\leq & \|\lambda_k - \bar \lambda_k)\|_\infty \left\| P_k( \bB(\Lambda) )-\mu_k \right\|_1 \\
=& \frac{1}{2} \left(\max_{x_k} \lambda_k (x_k) - \min_{x_k} \lambda_k (x_k) \right) \left\| P_k( \bB(\Lambda) )-\mu_k \right\|_1 \\
\leq & \frac{R_C^k}{2}\left\| P_k( \bB(\Lambda) )-\mu_k \right\|_1 . \label{eq:lam_k 1norm}
\end{align}
Similarly, defining $\bar \lambda_k^*= \frac{1}{2} ( \max_{x_k} \lambda_k^* (x_k) + \min_{x_k} \lambda_k^* (x_k)
)$, we derive the bound
\begin{align}
-{\lambda_k^*}^\tT \left( P_k( \bB(\Lambda) )-\mu_k \right)
= & (\bar \lambda_k^*-\lambda_k^* )^\tT \left( P_k( \bB(\Lambda) )-\mu_k \right)
\leq \frac{R_C^k}{2}\left\| P_k( \bB(\Lambda) )-\mu_k \right\|_1 .
\label{eq:lam star 1norm}
\end{align}
Summing \eqref{eq:lam_k 1norm} and \eqref{eq:lam star 1norm} over $k\in \Gamma$ yields
\begin{align}
\sum_{k\in \Gamma} 
(\lambda_k-\lambda_k^*)^\tT \left( P_k( \bB(\Lambda) )-\mu_k \right)
\leq \sum_{k\in \Gamma}  {R_C^k}\left\| P_k( \bB(\Lambda) )-\mu_k \right\|_1 
\leq   {R_C^\Gamma} \sum_{k\in \Gamma}  \left\| P_k( \bB(\Lambda) )-\mu_k \right\|_1 .
\end{align}
Together with \eqref{eq: sum Pk-mu} this completes the proof.
\end{proof}

\subsection{Proof of Lemma \ref{lemma:incremental psi}}

\begin{proof}
    Since $P(\bB(\Lambda^t))=1$ for all $ t$ and $u_{k^{(t+1)}}^{(t+1)} ./u_{k^{(t+1)}}^{(t)}= \mu_{k^{(t+1)}}./ P_{k^{(t+1)}} (\bB(\Lambda^{(t)}))$,
    \begin{align*}
        \psi(\Lambda^{(t)}) - \psi(\Lambda^{(t+1)})
        = & \mu_{k^{(t+1)}}^\tT \left(- \lambda_{k^{(t+1)}}^t
        + \lambda_{k^{(t+1)}}^{t+1} \right)                                            \\
        = & \eta \mu_{k^{(t+1)}}^\tT \log \frac{\mu_{k^{(t+1)}}}{P_\ell (\bB(\Lambda^{(t)}))} \\
        = & \eta \KL (\mu_{k^{(t+1)}} \ | \ P_{k^{(t+1)}} (\bB(\Lambda^{(t)}))).
    \end{align*}
    where $\KL$ is the Kullback–Leibler divergence.
    By Pinsker's inequality, we get
    \begin{align} \label{eq:Psi_Pinsker}
        \psi(\Lambda^{(t)}) -\psi(\Lambda^{(t+1)})
        \geq \frac{\eta}{2} \|\mu_{k^{(t+1)}} - P_{k^{(t+1)}} (\bB(\Lambda^{(t)}))  \|_1^2.
    \end{align}
    Since $k^{(t+1)}$ is randomly picked from a uniform distribution over $\Gamma \setminus k^{(t)}$ the expected value of \eqref{eq:Psi_Pinsker} is
    \begin{align*}
        \psi(\Lambda^{(t)}) -\E_{k^{(t+1)}} \left[\psi(\Lambda^{(t+1)})\right]
        \geq \frac{\eta}{2 (|\Gamma|-1)} \sum_{k\in \Gamma} \|\mu_k - P_k (\bB(\Lambda^{(t)}))  \|_1^2.
    \end{align*}
    By Cauchy–Schwarz inequality, it holds
    \begin{align*}
        \E_{k^{(t+1)}}  \left[	\psi(\Lambda^{(t)}) -\psi(\Lambda^{(t+1)})\right]
        \geq \frac{\eta}{2 (|\Gamma|-1)^2} \left( \sum_{k\in \Gamma} \|\mu_k - P_k (\bB(\Lambda^{(t)}))  \|_1 \right)^2.
    \end{align*}
\end{proof}

\subsection{Proof of Theorem~\ref{theo:t bound}}  

We need the following lemma from \citet{altschuler2020random} to connect the per-iteration expected improvement and the number of iterations.
\begin{lemma}{\cite[Lemma 5.3]{altschuler2020random}} \label{lemma:iter_bound}
    Assume $A>a, h>0$. Let $(Y_t)_{t=1}^\infty$ be a sequence of random variables adapted to a filtration $(\mF_t)_{t=0}^\infty$ such that 
    (i) $Y_0 \leq A$ almost surely, 
    (ii) $0 \leq Y_{t-1}-Y_t \leq 2(A-a)$ almost surely, and
    \begin{align*}
        \text{(iii) } \E \left[Y_t -Y_{t+1}| \mF_t, Y_t \geq a \right] \geq h \quad \forall t =0,1,2,\ldots.
    \end{align*}
    Then the stopping time $s=\min \{t: Y_t \leq a \}$ satisfies 1) the expectation bound $\E[s] \leq \frac{A-a}{h}+1$; and 2)  $\forall \delta \in(0,1/e)$, the probability bound $\P (s \leq \frac{6(A-a)}{h} \log \frac{1}{\delta}) \geq 1-\delta$ holds.
\end{lemma}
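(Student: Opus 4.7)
My plan is to base both bounds on a single supermartingale structure and then exploit either its first moment (for the expectation bound) or its moment generating function (for the high-probability bound).

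For Part 1, I introduce $M_t := Y_{t \wedge s} + h \cdot (t \wedge s)$ and verify it is a supermartingale with respect to $(\mF_t)_{t \geq 0}$. On the event $\{t < s\}$ one has $Y_t > a$, so condition (iii) gives $\E[Y_{t+1} \mid \mF_t] \leq Y_t - h$ and hence $\E[M_{t+1} \mid \mF_t] \leq M_t$; on $\{t \geq s\}$ the process $M$ is frozen. Optional stopping at a deterministic horizon $n$ and letting $n \to \infty$, with monotone convergence plus the uniform lower bound on $Y_s$ supplied by condition (ii), yields $h \, \E[s] \leq Y_0 - \E[Y_s] \leq A - \E[Y_s]$. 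A careful accounting that stops the supermartingale one step before $s$ (at $s-1$, where $Y_{s-1} > a$ still holds so no overshoot enters) and absorbs the last transition into an additive constant then sharpens this to $\E[s] \leq (A-a)/h + 1$.

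For Part 2, I switch to an exponential argument. The event $\{s > T\}$ forces $Y_T > a$, i.e., $\sum_{t=0}^{T-1}(Y_t - Y_{t+1}) < A - a$. By condition (ii) each increment lies in $[0, 2(A-a)]$, and by condition (iii) its conditional mean is at least $h$ on $\{t < s\}$. A standard MGF bound for bounded nonnegative random variables with prescribed mean (obtained by interpolating the convex function $x \mapsto e^{-\lambda x}$ on $[0,c]$) gives, for any $\lambda > 0$ and $c := 2(A-a)$,
\begin{equation*}
    \E \Bigl[ \exp\bigl(-\lambda \sum_{t=0}^{T-1}(Y_t - Y_{t+1}) \bigr) \ones_{s > T} \Bigr] \leq \exp\bigl( -T \tfrac{h}{c}(1 - e^{-\lambda c}) \bigr).
\end{equation*}
Applying Markov's inequality to this MGF bounds $\P(s > T) \leq \exp\bigl( \lambda(A-a) - T \tfrac{h}{c}(1 - e^{-\lambda c}) \bigr)$, and tuning $\lambda = \Theta(1/(A-a))$ together with $T = 6(A-a)\log(1/\delta)/h$ drives the right-hand side below $\delta$ for $\delta \in (0,1/e)$.

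The main obstacle is sharpening the constant in Part 1: a naive supermartingale argument only gives $\E[s] \leq 3(A-a)/h$, because $a - Y_s$ can be as large as $2(A-a)$ by (ii). Recovering the stated bound $(A-a)/h + 1$ requires the trick of stopping at $s-1$ (eliminating the overshoot term in the optional-stopping equation) and then paying a single extra step separately. In Part 2 the structural argument is fairly standard; the only non-trivial bookkeeping is the optimization over $\lambda$ needed to land exactly on the constant $6$, which comes from balancing the linear term $\lambda(A-a)$ against the exponential drift $T \tfrac{h}{c}(1 - e^{-\lambda c})$.
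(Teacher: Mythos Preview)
The paper does not give a proof of this lemma; it is quoted from \cite[Lemma~5.3]{altschuler2020random} and invoked as a black box in the proof of Theorem~\ref{theo:t bound}. There is therefore no in-paper argument to compare your proposal against.

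On the proposal itself: the supermartingale/optional-stopping skeleton for Part~1 and the conditional-MGF (Azuma-type) argument for Part~2 are the natural tools and will certainly deliver bounds of the correct order. Two remarks on the sharpening you attempt in Part~1. First, the device of ``stopping at $s-1$'' is not directly valid, because $s-1$ is in general not an $(\mF_t)$-stopping time: $\{s-1\le t\}=\{s\le t+1\}\in\mF_{t+1}$, so optional stopping cannot be applied there verbatim. Second, and more tellingly, the constant $(A-a)/h+1$ is in fact not attainable under hypotheses (i)--(iii) as written. Take $Y_0=A$ and let the increments $Y_t-Y_{t+1}$ be i.i.d., equal to $2(A-a)$ with probability $p=h/(2(A-a))$ and $0$ otherwise; then (i)--(iii) hold, yet $s$ is geometric with parameter $p$ and $\E[s]=1/p=2(A-a)/h$, which exceeds $(A-a)/h+1$ whenever $h<A-a$. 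So the difficulty you flagged is real: the hypotheses only support a bound like $2(A-a)/h$ or the $3(A-a)/h$ your ``naive'' argument gives, and the stated constant appears to be a transcription slip from the cited source. This does not affect the paper's downstream results, which use the lemma only up to absolute constants.
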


\begin{proof}[Proof of Theorem~\ref{theo:t bound}]
    Define the stopping time $\tau:=\min \left\{t: e_t \leq \epsilon'\right\}$.
    Let $\{\mF_t := \sigma \left(\Lambda^{(1)},\ldots,\Lambda^{(t)} \right)\}_t$ be  the natural filtration.
    By Lemma \ref{lem:psi2psi_star} and Lemma \ref{lemma:incremental psi},
    \begin{align*}
        \E \left[ \psi(\Lambda^{(t)}) - \psi(\Lambda^{(t+1)}) |\mF_t, t < \tau \right] \geq  \frac{\eta}{2 |\Gamma|^2} \left(\max\left\{\frac{\psi(\Lambda^{(t)})-\psi(\Lambda^*)}{\RC }, \epsilon' \right\}\right)^2,
    \end{align*}
    For shorthand, denote $\widetilde{\psi}(\Lambda^{(t)})=\psi(\Lambda^{(t)})-\psi(\Lambda^*)$, and let $\tau_1$ be the first iteration when $ \widetilde{\psi}(\Lambda^{(t)}) \leq \RC \epsilon'$ and $\tau_2:=\tau-\tau_1 \geq 0$.
    Define 
        \begin{equation}
            Z_t = \begin{cases} \widetilde{\psi}(\Lambda^{(t)}) & \mbox{if}~ t \leq \tau,
            \\ \widetilde{\psi}(\Lambda^{(t)}) - (t -\tau)\frac{\eta(\epsilon')^2}{2|\Gamma|^2} & \mbox{if}~ t>\tau.
            \end{cases}
        \end{equation}
        A direct observation is that $Z_t$ is monotonically decreasing.
        For $t\in [ \tau_1, \tau]$, let  $Y_{t-\tau_1}=Z_{t}$. Then the expected improvement of $Y_t$ per iteration is at least $\frac{\eta(\epsilon')^2}{2|\Gamma|^2} $, that is
    \begin{align*}
        \E \left[ Y_t - Y_{t+1} |\mF_t, Y_t \geq 0 \right] \geq  \frac{\eta(\epsilon')^2}{2|\Gamma|^2}.
    \end{align*}
    With choices $A=\RC \epsilon'$, $a=0$, and $h=\frac{\eta (\epsilon')^2}{2 |\Gamma|^2}$,  clearly $Y_t\le A$ and $0\le Y_t-Y_{t+1}\le 2(A-a)$. Thus, Lemma \ref{lemma:iter_bound} implies 
        \begin{align*}
        \E[\tau_2'] \leq \frac{2|\Gamma|^2 \RC }{\eta\epsilon'} +1 \quad \text{where } \tau_2'= \min \{t: Y_t \leq 0 \}.
    \end{align*}
    Whenever $t \leq \tau,$ we have $\widetilde{\psi}(\Lambda^{(t)} ) \geq 0$ and as such $Z_t \geq 0$. 
    So $\tau:=\min \left\{t: e_t \leq \epsilon'\right\}$ is achieved earlier than $\min\{ t: Z_t \leq 0 \}$ and this implies
    \begin{align}\label{eq:tau_2}
        \tau-\tau_1=\tau_2 \leq \tau_2'= \min \{t: Z_t \leq 0 \} -\tau_1 \quad \Rightarrow \quad  \E[\tau_2] \leq \E[\tau_2'] \leq \frac{2|\Gamma|^2 \RC}{\eta\epsilon'} +1.
    \end{align}
    To bound $\tau_1$, we define $D_0=\RC e_0$ and $D_i:=D_{i-1}/2$ for $i=1,2,\ldots$ until $D_N \leq \RC \epsilon'$. Let $\tau_{1,i}$ be the number of iterations when $D_i \leq \widetilde{\psi}(\Lambda^{(t)})\leq D_{i-1}$.
    Let $t_{1,i}=\min \{t: \widetilde{\psi}(\Lambda^{(t)}) \leq D_{i-1} \}$.
    Consider $A=D_{i-1}$, $a=D_i$,  $h=\frac{\eta}{2 |\Gamma|^2 \RC^2} D_i^2$, and $Y_t=Z_{t+t_{1,i}} $. It holds
    \begin{align*}
        \E \left[ Y_t - Y_{t+1} |\mF_t, Y_t \geq D_i \right] \geq  \frac{\eta}{2 |\Gamma|^2 \RC^2} \widetilde{\psi}(\Lambda^{(t)})^2 \geq  \frac{\eta}{2 |\Gamma|^2 \RC^2} D_i^2.
    \end{align*}
    In addition $Y_t\le A$ and $0 \le Y_t-Y_{t+1}\le D_{i-1} \le 2 (A-a)$ by the nonnegativity and monotonicity of $Y_t$.
    From Lemma~\ref{lemma:iter_bound} and the definition of the sequence $D_i$ it follows that 
    \begin{align}\label{eq:tau_1i}
        \E[ \tau_{1,i}] \leq  \frac{D_{i-1}- D_i}{ \eta D_i^2} 2|\Gamma|^2 \RC^2 + 1 \leq  \frac{2|\Gamma|^2\RC^2}{\eta D_i}+1.
    \end{align}
    Summing up Equation \eqref{eq:tau_1i} for $i=1,2,\ldots,N$ and Equation \eqref{eq:tau_2} yields
    \begin{align*}
        \E [\tau] \leq \frac{2|\Gamma|^2 \RC}{\eta\epsilon'} +1+ \sum_{i=1}^N \frac{2 |\Gamma|^2\RC^2}{\eta D_i}+N
        \leq \frac{2 |\Gamma|^2 \RC}{\eta\epsilon'} +1+ \frac{4 |\Gamma|^2\RC}{\eta \epsilon'}+ \log_2 \ceil*{\frac{e_0}{\epsilon'}}.
    \end{align*}
    Since 
    $$e_0:=\sum_{k \in \Gamma} \| P_k(\bB(\Lambda^{(t)})) - \mu_k \|_1 \leq \sum_{k \in \Gamma} \| P_k(\bB(\Lambda^{(t)}))\|_1 + \| \mu_k \|_1 =2 |\Gamma|, $$ there is
    $\log_2 \left( \frac{e_0}{\epsilon'} \right) \leq \frac{e_0}{\epsilon'} \leq  \frac{2 |\Gamma|}{\epsilon'}$. 
   And the mild assumption $\eta \leq 0.5 |\Gamma| R_C^\Gamma$ implies that 
    $$1+ \log_2 \ceil* a \leq 1+ \log_2 \ceil* b  \leq \frac{b |\Gamma| R_C^\Gamma}{\eta}, ~~ \forall ~ b \geq a>0, $$ resulting in $1+ \log_2 \ceil*{\frac{e_0}{\epsilon'}} \leq \frac{2 |\Gamma|^2 \RC}{\eta \epsilon'}.$
    It further follows 
    \begin{align*}  
    \E [\tau]    \leq    \frac{8 |\Gamma|^2\RC}{\eta \epsilon'}.
    \end{align*} 
    Next we prove the high probability bound.
    By Lemma \ref{lemma:iter_bound}, $\forall \delta \in (0,0.5)$,
    \begin{equation}
        \label{eq:tau2_prob}
        \P\left( \tau_2> \frac{12 |\Gamma|^2 \RC}{\eta \epsilon'} \log \frac{2}{\delta}\right) <\frac{\delta}{2}
    \end{equation}
    and with $\delta_i:= {\delta}/{2^{N-i+2}}$ for each $i=1,\ldots,N$,
    \begin{align*}
        \P\left( \tau_{1,i}> \frac{12 |\Gamma|^2 \RC^2}{\eta D_i} \log \frac{1}{\delta_i}\right) <\delta_i.
    \end{align*}
    Given the series summation $\sum_{i=0}^\infty 2^{-i} =\sum_{i=0}^\infty i \cdot 2^{-i} =2$ and the definition of $\delta_i$ and $D_N$, we have
    \begin{align*}
        \sum_{i=1}^N \frac{\log \frac{1}{\delta_i}}{D_i}
        =\frac{1}{D_N}\sum_{i=0}^{N-1} 2^{-i}{\left(\log \frac{4}{\delta}+i\log2 \right)}
        \leq \frac{2}{D_N}{\left(\log \frac{4}{\delta}+\log2 \right)}
        \leq \frac{3}{\RC \epsilon'}{\log \frac{4}{\delta}}.
    \end{align*}
    By taking the union over $\tau_{1,i}$ it follows
    \begin{align}\label{eq:tau1_prob}
        \P\left( \tau_{1}> \frac{36 |\Gamma|^2 \RC}{\eta \epsilon'} \log \frac{4}{\delta}\right)
        \leq \sum_{i=1}^N \P\left( \tau_{1,i}> \frac{12 |\Gamma|^2 \RC^2}{\eta D_i} \log \frac{1}{\delta}\right)
        <\frac{\delta}{2}.
    \end{align}
    Taking a union bound over Equation \eqref{eq:tau2_prob} and Equation \eqref{eq:tau1_prob}, we conclude that
    \begin{equation*}
        \P\left( \tau> \frac{48 |\Gamma|^2 \RC}{\eta \epsilon'} \log \frac{4}{\delta}\right) <{\delta}.
    \end{equation*}
\end{proof}

\subsection{Proof of Lemma~\ref{lem:round bound}}

\begin{proof}

    Due to the underlying tree structure of the problem it holds
    \begin{align*}
        \langle \bC, \bB\rangle-\langle\bC, \widehat{\bB}\rangle =  \sum_{(k_1,k_2)\in E } \langle C^{(k_1,k_2)}, P_{k_1,k_2}(\bB) - P_{k_1,k_2}(\widehat{\bB}) \rangle. 
    \end{align*}
    By \holder  inequality and \cite[Lemma 7]{altschuler2017near},
    \begin{align}
        \langle \bC, \bB\rangle-\langle\bC, \widehat{\bB}\rangle
        & \leq  \sum_{(k_1,k_2)\in E}  \| C^{(k_1,k_2)}\|_{\infty}  \| P_{k_1,k_2}(\bB) - P_{k_1,k_2}(\widehat{\bB}) \|_1    \\
        & \leq  2 \sum_{k \in \Gamma} \|C^{(k,\ell_k)}\|_{\infty} \|\mu_k-P_k({\bB}) \|_1.
    \end{align}
    In the second step note that $P_{k_1,k_2}(\bB) = P_{k_1,k_2}(\widehat{\bB})$ by construction whenever $k_1,k_2 \notin \Gamma$. Also, note that for $k\in \Gamma$ we can use the bound in \cite[Lemma 7]{altschuler2017near} to get
    \begin{align}
     \| P_{k,\ell_k}(\widehat{\bB}) - P_{k,\ell_k}(\bB) \|_1 
     \leq & 2 \left(  \| P_{k}(\widehat{\bB}) - P_{k}(\bB) \|_1 +  \| P_{\ell_k}(\widehat{\bB}) - P_{\ell}( \bB) \|_1 \right) \\
     =& 2\| P_{k}(\widehat{\bB}) - P_{k}(\bB) \|_1\\
     =& 2\| \mu_k - P_{k}(\bB) \|_1.
    \end{align}

\end{proof}

\subsection{Proof of Lemma~\ref{lem:Bhat_Bstar}}
\begin{proof} 

    Let $\widetilde{ \bY}$ denote the tensor that is returned from Algorithm~\ref{algo:round} with inputs $\bB^*$ and $\{P_k(\widetilde{\bB}) \}_{k\in \Gamma}$.
    Note that $\widetilde{\bB}$ is the optimal solution to 
    \begin{align*}
        \min_{ \bB \in \Pi^m_\Gamma( (P_k({\widetilde{\bB}}) )_{k\in \Gamma} )} \langle \bC, \bB \rangle + \eta H(\bB | \bM ),
    \end{align*}
    which can easily be verified by checking the KKT conditions. Thus, it holds
    \begin{align*}
        \langle \bC, \widetilde{\bB} \rangle + \eta H(\widetilde{\bB} | \bM ) \leq
        \langle \bC, {\widetilde{\bY}} \rangle + \eta H({\widetilde{\bY}} | \bM ).
    \end{align*}
    
    Since $\langle \widetilde{\bB}, \log(\widetilde{\bB}) \rangle \geq -m \log(n)$ and $\langle \widetilde{\bY}, \log(\widetilde{\bY}) \rangle \leq 0$ it follows that
    \begin{equation} \label{eq:round_result}
    \begin{aligned}
        \langle \bC, \widetilde{\bB} \rangle -  \langle \bC, {\widetilde{ \bY}} \rangle 
        & \leq \eta H({\widetilde{\bY}} | \bM ) - \eta H(\widetilde{\bB} | \bM ) \\
        & \leq - \langle \widetilde{\bB}, \log(\widetilde{\bB}) \rangle +  \langle \widetilde \bB-\widetilde \bY, \log \bM\rangle \\
        & \leq \eta m \log(n) + \eta \sum_{k\in\Gamma}\langle P_k(\tilde \bB)-P_k(\tilde \bY),  \log \mu_k\rangle \\
        & = \eta m \log(n).
    \end{aligned}
    \end{equation}
    Lemma \ref{lem:round bound} gives
    \begin{align}
        \langle \bC, {\widetilde{ \bY}} \rangle -\langle \bC, {\bB^*} \rangle  \leq 2 \sum_{k\in \Gamma} \|C^{(k,\ell_k)}\|_{\infty} \|P_k(\widetilde{\bY})-\mu_k \|_1, \label{eq:round1} \\
        \langle \bC, {\widehat{\bB}} \rangle -\langle \bC, {\widetilde{\bB}} \rangle  \leq 2 \sum_{k\in \Gamma} \|C^{(k,\ell_k)}\|_{\infty} \|P_k(\widetilde{\bB})-\mu_k \|_1.\label{eq:round2}
    \end{align}
    Since $	P_k(\widetilde{\bB})=P_k(\widetilde{\bY})$, summing up \eqref{eq:round_result}, \eqref{eq:round1}, and \eqref{eq:round2} concludes the proof.

\end{proof}


\subsection{Proof of Theorem~\ref{thm:general_complexity}}
\label{sec:average_dist}

\begin{proof}
In the case of a general graph, we factorize it according to a junction tree with minimal tree-width and modify the messages in Algorithm~\ref{algo:sinkhorn} to the message passing scheme in \eqref{eq:message_clusters}.
Note that each message update requires at most $\mathcal{O}( n^{w(G)+1})$ operations. In order to perform one iteration of Algorithm~\ref{algo:sinkhorn} on a junction tree, at most $d(\mathcal{T})$ messages have to be updated. Thus, each iteration of Algorithm~\ref{algo:sinkhorn} on a junction tree requires $\mathcal{O}( d(\mathcal{T}) n^{w(G)+1})$ operations. 
The results in Lemma~\ref{lem:bound_lambda}-\ref{lem:Bhat_Bstar} and and Theorem \ref{theo:t bound} can be applied to the junction tree version of the presented methods. In particular, the constant in Lemma~\ref{lem:bound_lambda} is modified to 
$R^k_C = \|\bC_{c_{\ell_k}}( \bx_{c_{\ell_k}})\|_\infty$,
where $c_{\ell_k}$ is the neighbouring clique to $c_k$. 
Letting $\RC = \max_{k\in \Gamma} R^k_C$, the proof follows as the proof of Theorem~\ref{thm:opercomplexity}, where the per-iteration complexity is now $\mathcal{O}( d(\mathcal{T}) n^{w(G)+1})$.
\end{proof}

\subsection{Proof of Theorem~\ref{thm:general_average_dist}}
\label{sec:average_dist}

\begin{proof}
First, note that the expected time of one iteration is $\E[T_t] = \mO(\bar d(G) n^2)$, for all $t$. The expectation of the random variables $T_t$ is thus bounded and equal for all $t$. We also note that
\begin{equation}
    \E[T_t\ones_{\tau \geq t} ]=\E[T_t | {\tau \geq t} ]\P( {\tau \geq t} )=\E[T_t ]\P( {\tau \geq t} ).
\end{equation}
Moreover,
\begin{equation}
    \sum_{t=1}^\infty \E[T_t \ones_{\tau \geq t} ]=\sum_{t=1}^\infty \E[T_t ] \P({\tau \geq t})
= \E[T_t ] \sum_{t=1}^\infty  \P({\tau \geq t})
= \E[T_t ] \E[\tau ] < \infty.
\end{equation}
Thus, by the general Wald's equation \citep{wald1945some}\citep[Lemma 5.6]{altschuler2020random} 
it follows
\begin{align*}
    \E[T]=\E\left[\sum_{t=1}^\tau T_t \right]
    = \E\left[\tau \right] \E[T_1]
    =\mathcal{O} \left(\frac{ \bar d(G)   mn^2 |\Gamma|^2 (\RC)^2 \log(n)}{\epsilon^2} \right).
\end{align*}
\end{proof}

\section{Additional experiments results}
For brutal force Sinkhorn and Sinkhorn BP, we use the code given by \url{https://github.com/qshzh/cbp} and make necessary modifications, such as random update rules.

We show another set of experiments with a smaller accuracy $\epsilon=0.2$ in this section.
Otherwise the setting is the 
as in Section \ref{sec:exp}. The curves in Figure \ref{fig:exp_small_eps} are similar to Figure \ref{fig:exp}, which means the run time dependence on $m$ or $n$ is relatively stable no matter how $\epsilon$ varies as long as it is sufficiently large. A very small $\epsilon$ will result in numerical issues; this is a well-known problem for Sinkhorn type algorithms.

\begin{figure}[h]
   \centering
   \begin{subfigure}{0.65\textwidth}
       \centering
      \includegraphics[width=0.48\linewidth]{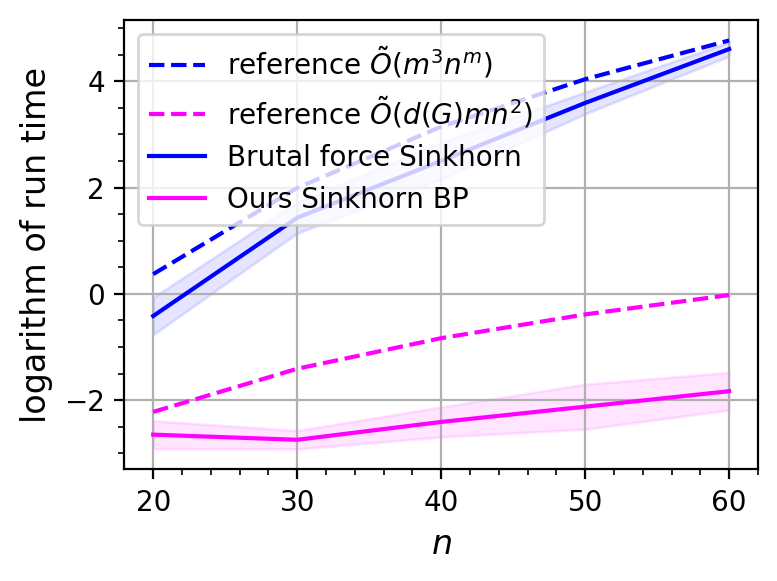}   \includegraphics[width=0.48\linewidth]{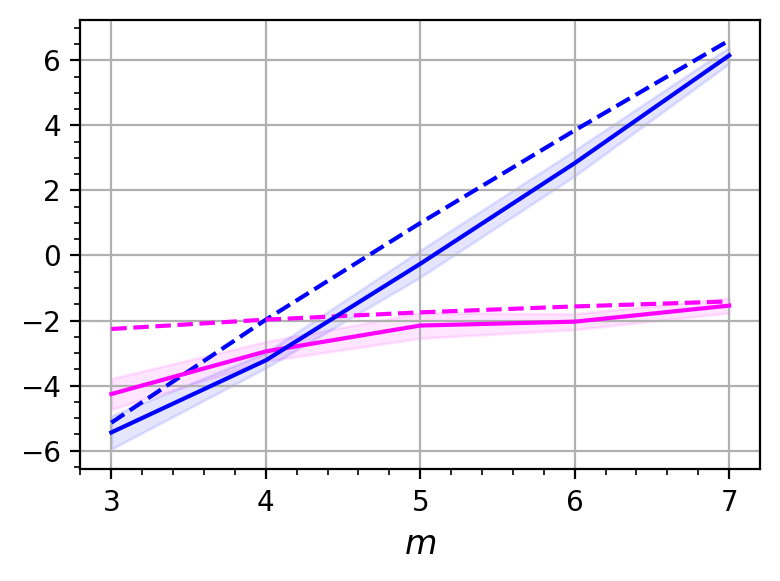}
      \caption{Barycenter.}
   \end{subfigure}
   \begin{subfigure}{0.65\textwidth}
\centering
\includegraphics[width=0.48\linewidth]{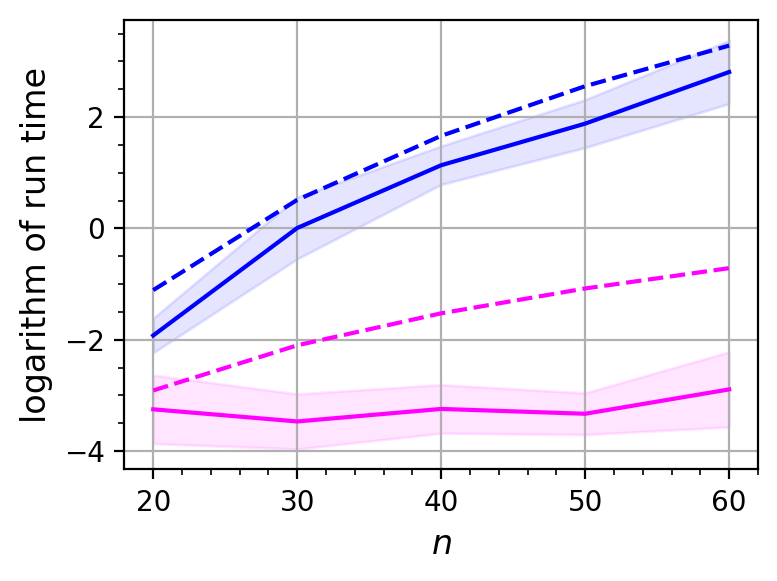}
\includegraphics[width=0.48\linewidth]{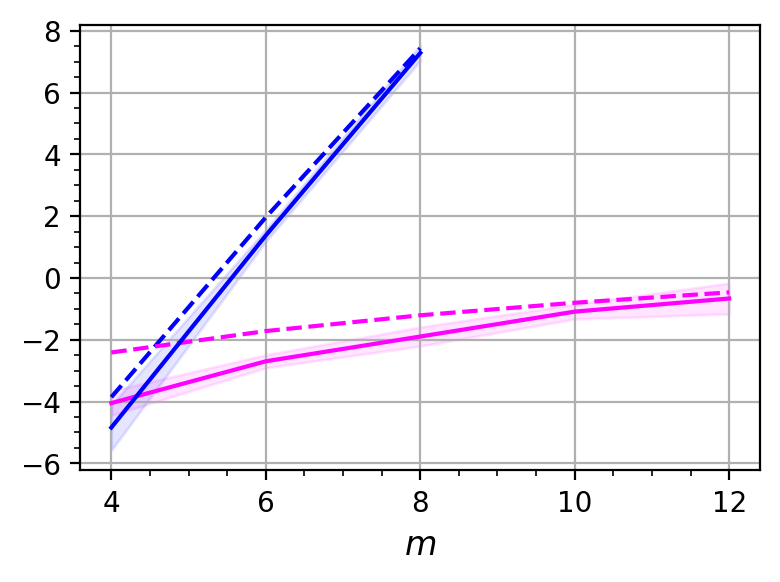} 
\caption{Hidden Markov Model.}
   \end{subfigure}
\begin{subfigure}{0.65\textwidth}
\centering
\includegraphics[width=0.48\linewidth]{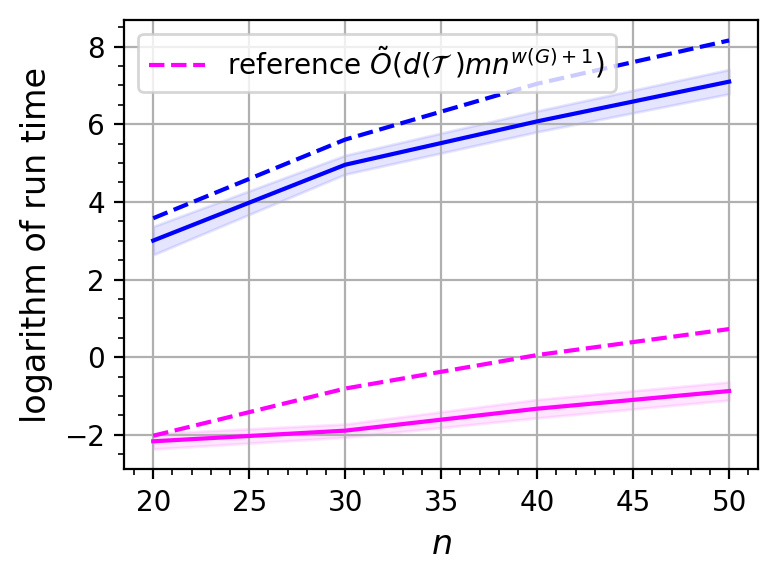}
\includegraphics[width=0.48\linewidth]{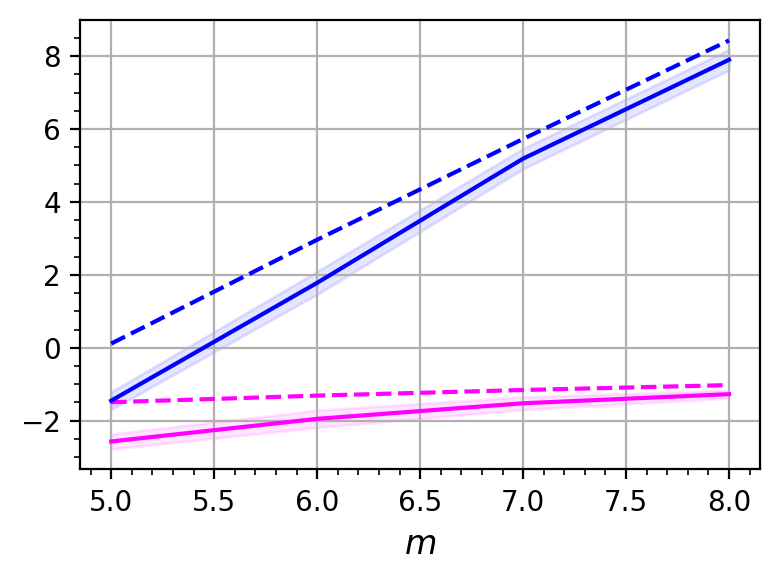} 
\caption{Wasserstein least square.}
\end{subfigure}
\caption{Logarithm of total run time in seconds for brutal force Sinkhorn and Sinkhorn belief propagation. We use a smaller accuracy $\epsilon=0.2$ here.}
 \label{fig:exp_small_eps}
\end{figure}





\end{document}